\def\eqref#1{(\ref{#1})}
\def\1{\bm{1}}
\DeclareMathAlphabet{\mathsfit}{\encodingdefault}{\sfdefault}{m}{sl}
\SetMathAlphabet{\mathsfit}{bold}{\encodingdefault}{\sfdefault}{bx}{n}
\DeclareMathOperator*{\argmin}{arg\,min}
\renewcommand\cite{\citep}
\definecolor{darkblue}{rgb}{0, 0, 0.5}
\newtheorem{theorem}{Theorem}
\numberwithin{theorem}{section}
\newtheorem{definition}[theorem]{Definition}
\newtheorem{proposition}[theorem]{Proposition}
\newtheorem{corollary}[theorem]{Corollary}
\newtheorem{lemma}[theorem]{Lemma}
\theoremstyle{definition}
\acrodef{FWER}{family-wise error rate}
\acrodef{FST}{Fixed Sequence Testing}
\acrodef{SST}{Split Sequence Testing}
\acrodef{SGT}{Sequential Graphical Testing}
\acrodef{MHA}{multi-head attention}
\acrodef{MHT}{multiple hypothesis testing}
\acrodef{LTT}{Learn then Test}
\acrodef{BBO}{Black Box Optimization}
\titlespacing*{\subsection}{0pt}{.2\baselineskip}{.2\baselineskip}
\titlespacing*{\subsubsection}{0pt}{.2\baselineskip}{.2\baselineskip}
\titlespacing*{\section}{0pt}{.2\baselineskip}{.2\baselineskip}
\title{Efficiently Controlling Multiple Risks with Pareto Testing}
\author{Bracha Laufer-Goldshtein \quad Adam Fisch \quad Regina Barzilay \quad Tommi Jaakkola\\ 
Computer Science and Artificial Intelligence Laboratory (CSAIL)\\
Massachusetts Institute of Technology\\ 
\texttt{\{lauferb,fisch,regina,tommi\}@csail.mit.edu} \\
}
\newcommand{\newpar}[1]{\textbf{{#1}.~}}
\begin{document}

\maketitle

\begin{abstract}
Machine learning applications frequently come with multiple diverse objectives and constraints that can change over time. Accordingly, trained models can be tuned with sets of hyper-parameters that affect their predictive behavior (e.g., their run-time efficiency versus error rate). As the number of constraints and hyper-parameter dimensions grow, naively selected settings may lead to sub-optimal and/or unreliable results. We develop an efficient method for calibrating models such that their predictions provably satisfy multiple explicit and simultaneous statistical guarantees (e.g., upper-bounded error rates), while also optimizing any number of additional, unconstrained objectives (e.g., total run-time cost). Building on recent results in distribution-free, finite-sample risk control for general losses, we propose Pareto Testing: a two-stage process which combines multi-objective optimization with multiple hypothesis testing. The optimization stage constructs a set of promising combinations on the Pareto frontier. We then apply statistical testing to this frontier only to identify configurations that have (i) high utility with respect to our objectives, and (ii) guaranteed risk levels with respect to our constraints, with specifiable high probability. We demonstrate the effectiveness of our approach to reliably accelerate the execution of large-scale Transformer models in natural language processing (NLP) applications. In particular, we show how Pareto Testing can be used to dynamically configure multiple inter-dependent model attributes—including the number of layers computed before exiting, number of attention heads pruned, or number of text tokens considered—to simultaneously control and optimize various accuracy and cost metrics.
\looseness=-1
\end{abstract}

\section{Introduction}
\label{sec:intro}

 Suppose you want to deploy a modern machine learning model in a real-world environment. As a practitioner, you may frequently have to  weigh several performance considerations~\cite{jin2008pareto, ribeiro-etal-2020-beyond,pmlr-v133-min21a}. For example, how much computational budget can you spend? What accuracy do you require? How large, if any, of a discrepancy in predictive performance across different groups of end-users can you tolerate? Often models are equipped with hyper-parameter configurations that provide ``knobs'' for tuning  different aspects of their performance, depending on how such questions are answered. As the number of parameter dimensions and objectives grow, however, choosing the right set of  parameters to rigorously control model performance on test data in the intended ways can become prone to error.\looseness=-1
 
To address this challenge, the recently proposed \emph{Learn Then Test} (LTT) framework of \citet{angelopoulos2021learn} combines any type of parameterizable predictive model with classic statistical hypothesis testing to provide an algorithm for selecting  configurations that lead to provable distribution-free, finite-sample risk control of any user-specified objective. Nevertheless, while theoretically general, again,  a key pair of practical challenges arises when the space of parameters to explore and constraints to satisfy are large. The first is that evaluating all possible configurations can quickly become intractable, while the second is that the statistical tests relied upon to guarantee risk control can quickly lose power---and fail to identify configurations that are also useful for the task at hand.\looseness=-1

In this work, we  build upon the results of LTT by introducing \emph{Pareto Testing}, a simple  procedure that can provide a computationally and statistically efficient way to identify valid, risk-controlling configurations with (specifiable) high probability, which, critically, are also useful with respect to other objectives of interest. Our method consists of two stages. In the first stage, we solve an unconstrained, multi-objective optimization problem in order to recover an approximate set of Pareto-optimal configurations, i.e., settings for which no other configuration exists that is uniformly better in all respects. Here we can exploit standard multi-objective optimization methods to efficiently explore and filter large parameter spaces to only its most promising configurations. In the second stage, we perform rigorous sequential testing over the recovered set, which we empirically find to yield tight control of our desired risks, while also giving good performance with respect to our free objectives.\looseness=-1\footnote{If we fail to find any valid configurations (which may not exist) with the right confidence, then we abstain.\looseness=-1}

\begin{figure}[!t]
\begin{center}
\includegraphics[width=0.85\textwidth]{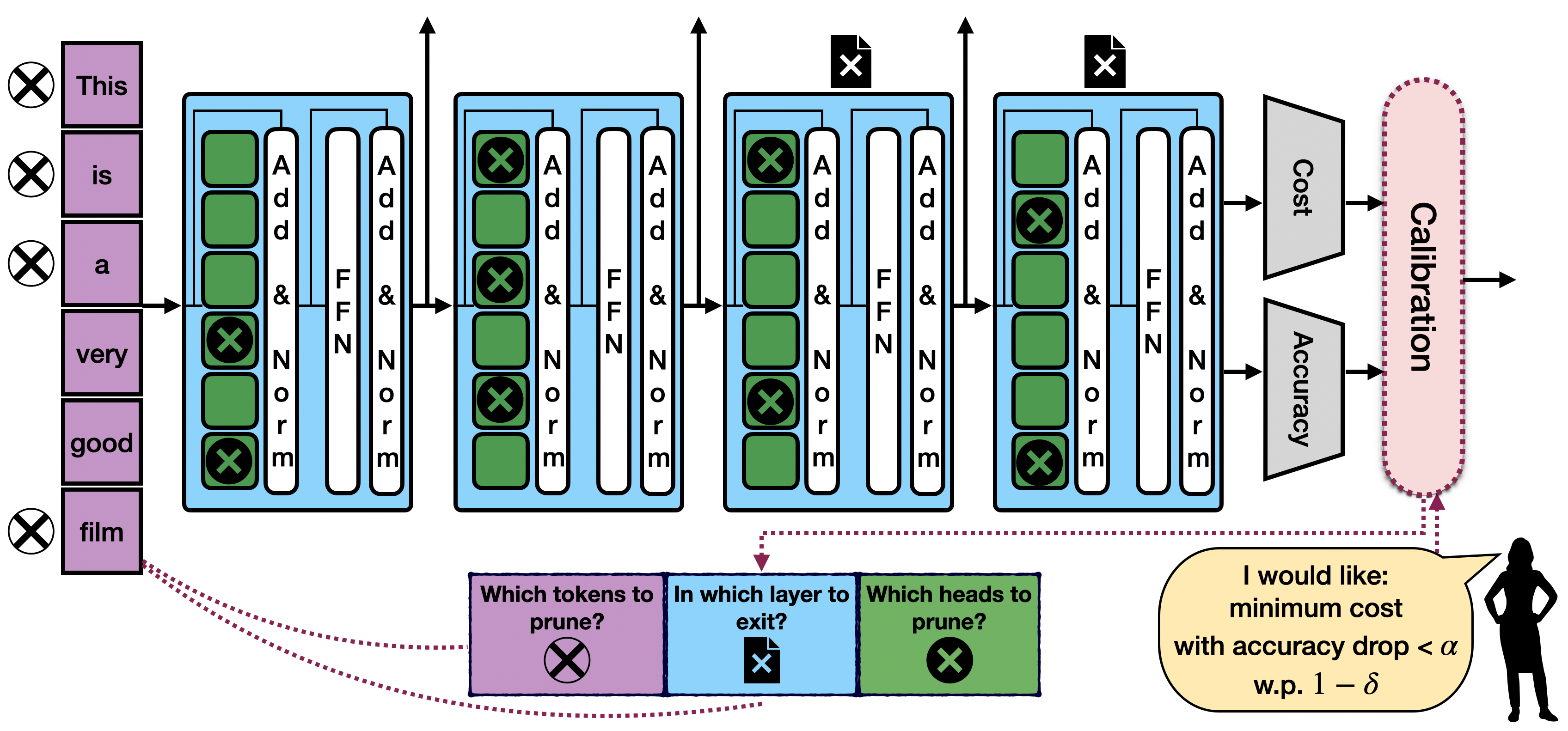}
\end{center}
\caption{
A demonstration of our calibration procedure applied to multi-dimensional adaptive computation in a Transformer model. Here we have the option to drop tokens from the input, make an ``early-exit'' prediction after processing a subset of the layers, or only compute a subset of the self-attention heads in each layer. Each dimension affects the overall  performance in non-trivial and non-independent ways. Our method determines the extent to which we can apply each adaptation  such that any user-specified performance constraints are \emph{guaranteed} to be met (e.g., minimal accuracy drop), while \emph{also} optimizing other desirable (but unconstrained) objectives (e.g.,   inference cost).\looseness=-1}
\vspace{-15pt}
\label{fig:intro_example}
\end{figure}

We apply our approach to configurable, adaptive computation in natural language processing (NLP) tasks (see Figure~\ref{fig:intro_example}).
In a nutshell, large-scale Transformer models~\cite{vaswani2017attention}, though accurate for many NLP tasks, can also be incredibly expensive to run~\cite{bapna2019controlling, greenai_paper, sustainlp-2021-simple}. Although larger models generally perform better, the same amount of computation is not always required for every application, domain, or example to achieve ``satisfactory'' performance. As such, many techniques have been proposed for adaptive computation, including attention head pruning, token dropping, or early exiting~\cite{graves2016adaptive, xin-etal-2020-deebert, hou2020dynabert, goyal2020power}. Still, the process for determining different thresholds that control the extent to which to apply none, any, or all of these modifications simultaneously without incurring unpredictable degradations in various measures of model performance can be tricky. In this regard, our procedure provides what other general adaptive computation strategies---including those that provide control over \emph{singular} objectives such as consistency as in~\citet{schuster2021consistent, schuster2022confident}---do not: a flexible framework for jointly configuring \emph{multiple} model settings subject to \emph{multiple} statistical guarantees on model performance, including bounds on the average reduction in accuracy and worst-case reduction in accuracy while optimizing average inference cost, or vice versa.\looseness=-1

\textbf{Contribution.} The core idea and contribution of our work can be summarized quite plainly:
\begin{enumerate}[leftmargin=*,noitemsep]
    \item Our framework leverages statistical testing techniques via the \acs{LTT} framework~\cite{angelopoulos2021learn} to identify valid risk-controlling hyper-parameter configurations;\vspace{2pt}
    \item To improve efficiency, we introduce Pareto Testing, our main contribution, as a way to efficiently guide the number and order of  configurations that we test when searching for valid settings;\vspace{2pt}
    \item We demonstrate the scalability and effectiveness of our method in managing trade-offs in multi-dimensional adaptive computation in NLP applications with large-scale Transformer models;\vspace{2pt}
    \item On diverse text classification tasks, we empirically achieve tight, simultaneous control of multiple risks while also improving performance on any non-controlled objectives, relative to  baselines.
\end{enumerate}

\section{Related work}
\label{sec:related}
\newpar{Risk control} Our work adds to a rich history of tools for uncertainty estimation and risk control for machine learning algorithms~\cite{vovk2002calibration,vovk2015probabilistic,pmlr-v60-vovk17a, lei-robins-wasserman-dfps, lei2018distribution, gupta2020binary, bates2021distribution, barber2021predictive, angelopoulos2021learn}. Here we focus on achieving model-agnostic, distribution-free, and finite-sample performance guarantees---similar to the coverage guarantees given by prediction sets or regression intervals in conformal prediction~\cite{papadopoulos2002inductive, vovk2005algorithmic, angelopoulos2022conformal}. As outlined in \S\ref{sec:intro}, this paper builds on the groundwork set by \citet{angelopoulos2021learn}, which provides a general methodology for calibrating any risk function that is controllable via some low-dimensional hyper-parameter configuration. We extend their framework to efficiently handle (relatively) higher-dimensional settings with multiple auxiliary objectives. Our application to confident model acceleration is also closely related to \citet{schuster2021consistent, schuster2022confident}, though our method is designed for a much broader setting that involves (i) multiple objectives, and (ii) multiple model pruning dimensions.

\newpar{Multi-objective optimization} Solving for multiple objectives is a fundamental problem~\cite{kdeb01, miettinen2012nonlinear,bradford2018efficient}. Typically, multi-objective problems are more difficult than single-objective problems, as a single solution does not always exist due to trade-offs. Instead, there is a set of solutions that are all equally ``good'', which is known as the Pareto frontier~\cite{censor1977pareto,arora2004introduction}. Our setting falls at the intersection of multi-objective optimization and risk-control, where we want to perform multi-objective optimization subject to statistical bounds on a subset of the objectives. Our two-stage approach is able to directly combine techniques in multi-objective optimization~\cite{knowles2006parego, lindauer2022smac3} with those in risk control~\cite{angelopoulos2021learn}, in order to identify valid, statistically efficient solutions.\looseness=-1

\newpar{Model configuration} We approach our multi-objective optimization problem by uncovering model configurations that deliver on the desired performance guarantees (e.g., maximal error rates), while also providing ``best-effort'' optimization of the auxiliary objectives (e.g., minimal inference cost) without any re-training. This is adjacent to the field of hyper-parameter tuning and architecture search, which deals with determining appropriate model hyper-parameter values, or even designing higher-level network structures~\citep{elsken2019neural}. While most approaches focus on finding configurations that maximize predictive performance, some have also considered additional measures such as
efficiency~\cite{shah2016pareto,belakaria2019max, elsken2018efficient, dong2018dpp, zhou2018resource,chu2020multi}, fairness~\citep{schmucker2020multi,candelieri2022fair}, or robustness~\cite{karl2022multi}. Our work, however, differs by treating hyper-parameter selection as a multiple-testing problem with rigorous  statistical guarantees following~\citet{angelopoulos2021learn}.\looseness=-1

\textbf{Adaptive computation.} Our main application is  configuring adaptive model computation. Large-scale deep learning models can be accurate, but also computationally intensive to run. Many efforts have been focused on improving run-time efficiency, including model distillation~\citep{sanh2019distilbert,jiao2020tinybert,sun2020mobilebert}, dynamic architecture selection~\citep{yu2018slimmable,cai2020once,hou2020dynabert}, early-exiting~\citep{teerapittayanon2016branchynet,liu2020fastbert}, token pruning~\citep{goyal2020power,ye2021tr,kim2021learned, modarressi2022adapler,guan2022transkimmer}, and others. This work focuses on configurable, adaptive computation that does not require model re-training. Furthermore, only a few methods have proposed \emph{combining} multiple pruning dimensions, such as depth and width~\citep{hou2020dynabert}, token pruning with early-exiting~\citep{he2021magic}, and pruning model units of different granularity~\citep{xia2022structured}. Our multi-dimensional calibration scheme generalizes these approaches, and allows for flexible tuning in each pruning axis.\looseness=-1
\section{Problem formulation}
\label{sec:problem}
Consider an input variable $X\in \mathcal{X}$, and an associated label $Y \in \mathcal{Y}$, drawn from some joint distribution. Assume a predictive model of the form
$f \colon \mathcal{X} \times \mathcal{T} \rightarrow \mathcal{Y}$, where $ \mathcal{T}\triangleq  \mathcal{T}_1\times \ldots \times \mathcal{T}_n$ is the space of $n$ hyper-parameters $(\tau_1, \ldots, \tau_n)$ that configure $f$.
The parameters of the model $f$ are optimized over a training set $\mathcal{D}_\textrm{train}$, while the hyper-parameters then provide $n$ additional degrees of freedom in influencing either (i) how the model is trained over $\mathcal{D}_\textrm{train}$, or (ii) how the model is used. We focus on the latter in this paper. For example, in our adaptive Transformer example, $f$ has $n=3$ pruning dimensions: the number of attention heads per layer, the truncated length of the input text sequence, and the effective network depth in terms of the selected early-exit layer.
In this scenario, the hyper-parameters $(\tau_1,\tau_2,\tau_3)$ are real-valued thresholds, and determine the extent of sparsification along each axis given some ``importance/confidence'' score (to be defined later).\looseness=-1

Next, consider a set of objective functions $\{Q_1,\ldots, Q_{c+1}\}$ of the form $Q_i(\tau_1,\ldots,\tau_n)=\mathbb{E}[q_i(X,Y;  \tau_1,\ldots,\tau_n)]$ for some loss function $q_i$. For simplicity, we assume a setting in which the user wishes to arbitrarily bound the first $c$ objective functions (hereby called \emph{risk} functions) by $\{\alpha_1, \ldots, \alpha_c\} \in \mathbb{R}^c$, while also minimizing the remaining  objective function, $Q_{c+1}$ (we consider multiple free objectives later). We further assume that $(\tau_1, \ldots, \tau_n)$ can be used to either increase or decrease $Q_i$ (not necessarily independently), though we do not assume that any value is achievable.\looseness=-1

To estimate suitable hyper-parameter values, let $\mathcal{D}_\mathrm{cal} = (X_i, Y_i)$, $i=1,\ldots,m$ be an i.i.d. \emph{calibration} set that we will use to select $(\hat{\tau}_1, \ldots, \hat{\tau}_n)$. As functions of $\mathcal{D}_\mathrm{cal}$, $(\hat{\tau}_1, \ldots, \hat{\tau}_n)$ are also random variables, and therefore $Q_i(\hat{\tau}_1, \ldots, \hat{\tau}_n)$ is  a \emph{random} conditional expectation (and constant \emph{given} $\mathcal{D}_\mathrm{cal}$). 
Our goal is to select $(\hat{\tau}_1, \ldots, \hat{\tau}_n)$ in such a way that the $\{Q_1, \ldots, Q_c\}$ objectives that we wish to control are appropriately bounded with specifiable high probability---as we now formally define.

\begin{definition}[$(\alpha,\delta)$-risk controlling configuration]
Let $\mathcal{D}_\mathrm{cal} = \{(X_i, Y_i)\}_{i=1}^m$ be i.i.d. random variables that are used to estimate a model configuration $(\hat{\tau}_1, \ldots, \hat{\tau}_n)$. Let $\{Q_i(\hat{\tau}_1, \ldots, \hat{\tau}_n)\}_{i=1}^c$ be a set of risk functions conditioned on the choice of $(\hat{\tau}_1, \ldots, \hat{\tau}_n)$. For any set of risk levels $\{\alpha_i\}_{i=1}^c$ and tolerance $\delta \in (0, 1)$, we  say that $(\hat{\tau}_1,\ldots,\hat{\tau}_n)$ is a $(\alpha, \delta)$-risk controlling configuration, if:\looseness=-1\footnote{This is a slight abuse of terminology in that, technically, $(\hat{\tau}_1, \ldots, \hat{\tau}_n)$ as a random variable is not necessarily a configuration that achieves risk control, but rather its realizations are valid with the appropriate probability.\looseness=-1}
\begin{equation}
\label{eq:delta_risk}
\mathbb{P}\left(Q_i(\hat{\tau}_1,\ldots,\hat{\tau}_n) \leq \alpha_i \right) \geq 1 - \delta,\>\> \forall 1\leq i \leq c \quad \text{simultaneously,}
\end{equation} 
where the probability in Eq.~\eqref{eq:delta_risk} is over the draw of $\mathcal{D}_\mathrm{cal}$.  
\label{def:cont}
\end{definition}

Many satisfactory configurations may exist for a given task and constraints. Our key practical goal is to find a $(\alpha, \delta)$-risk controlling configuration that \emph{also} best minimizes the remaining objective $Q_{c+1}$ (or even more objectives). As such, we focus on the expected performance of $Q_{c+1}(\hat{\tau}_1, \ldots, \hat{\tau}_n)$ as a relative measure of effectiveness when comparing procedures for selecting $(\hat{\tau}_1, \ldots, \hat{\tau}_n)$.\looseness=-1\footnote{This is analogous to using the average set size to compare conformal predictors~\cite{vovk2005algorithmic, vovk2016criiteria}.\looseness=-1}

\section{Background}
\label{sec:background}
We begin with a brief review of the \ac{LTT} framework of \citet{angelopoulos2021learn}, on which our method is based.
The core idea of LTT is to split model development into two stages:  first a predictive model $f$ is fit to training data using  some learning procedure, and second a low-dimensional hyper-parameter combination $\bm{\tau} = (\tau_1, \ldots, \tau_n) \in \mathcal{T}$ that controls the way the model makes predictions is selected in a way that provides rigorous risk control via hypothesis testing.\looseness=-1

\newpar{Risk control as a hypothesis test} Consider a single risk $Q$. {A set of possible configurations $\mathcal{T}_g$ is chosen for testing, usually by defining a discrete grid over $\mathcal{T}$.} For each configuration {$\bm{\tau}\in\mathcal{T}_g$}, LTT tests the null hypothesis $H_{\bm{\tau}}\colon Q(\bm{\tau}) > \alpha$, i.e., that the risk is \emph{not} controlled. Rejecting $H_{\bm{\tau}}$ then implies that $\bm{\tau}$ \emph{is} risk-controlling. A valid p-value, $p^\textrm{cal}(\bm{\tau},\alpha) = p(\hat{Q}^\mathrm{cal}(\bm{\tau});\alpha, m)$, to use as a basis for accepting or rejecting $H_{\bm{\tau}}$, can be derived from concentration bounds on the empirical risk, $\hat{Q}^\mathrm{cal}(\bm{\tau})$, computed over i.i.d. calibration data (we use $p(\cdot)$ to denote some p-value calculation---in this work we use Hoeffding-Bentkus p-values throughout, see~\citet{angelopoulos2021learn}).
A subset of valid $\bm{\tau}$ configurations is then selected {out of $\mathcal{T}_g$} by applying a \ac{FWER} controlling procedure, so that the probability of making one or more false discoveries (i.e., choosing {invalid} $\bm{\tau}$) is bounded by  $\delta \in (0, 1)$. This can be extended to \emph{multiple} risk control by defining
$H_{\bm{\tau}}: \exists i \text{ where } Q_i(\bm{\tau}) > \alpha_i$, and a valid combined p-value can be obtained by (see proof on Appendix~\ref{sec:math})\looseness=-1
\begin{equation}
p^\textrm{cal}(\bm{\tau},\bm{\alpha}) = \max_{1\leq i \leq c} 
 p(\hat{Q}_i^\textrm{cal}(\bm{\tau});\alpha_i, m),\quad \text{with $\bm{\alpha}=(\alpha_1,\ldots,\alpha_c)$.}
 \label{eq:max_p_val}
\end{equation} 

\textbf{Multiple hypothesis testing.} A key component of LTT is the choice FWER-controlling procedure. As the number of tested hypotheses $H_{\bm{\tau}}$ grows (e.g., for combinatorially many $\bm{\tau}$), the harder it is to reject $H_{\bm{\tau}}$ while limiting the probability of false discoveries to $\delta$. Different FWER-controlling procedures will have different statistical efficiency/power (i.e., ability to correctly reject $H_{\bm{\tau}}$ when it is false). \citet{angelopoulos2021learn} consider a number of  FWER-controlling procedures, namely the Bonferroni correction, \ac{FST}, and \ac{SGT}---see Appendix~\ref{sec:fwer} for a complete discussion. At a high level, the Bonferroni correction naively assigns an ``error budget'' {$\delta/|\mathcal{T}_g|$} to each possible $\bm{\tau} \in \mathcal{T}_g$. For large $|\mathcal{T}_g|$, this tolerance is extremely strict, and results in very conservative rejections. FST and SGT attempt to exploit \emph{structure} in $\mathcal{T}$ by ordering and testing $\bm{\tau} \in \mathcal{T}_g$ in ways that are likely to result in more valid rejections. Nevertheless, \ac{FST} and \ac{SGT} can still be (statistically) inefficient and challenging to apply over large, and possibly unstructured (or with unknown structure) $\mathcal{T}$. This challenge intensifies when combined with the \emph{additional} goal of finding configurations  that not only provide multiple risk control, but also optimize $Q_{c+1}$.

\section{Pareto testing}
\label{sec:pareto_testing}
We now present our method for selecting effective risk-controlling configurations. 
Note that a related constrained optimization problem can be defined by:
\begin{equation}
    \label{eq:constrained_opt}
\begin{split}
    \argmin_{(\tau_1,\ldots, \tau_n) \in \mathcal{T}} \quad \hat{Q}_{c+1}(\tau_1,\ldots,\tau_n)
    \quad  \textrm{s.t.} \quad  \hat{Q}_i(\tau_1,\ldots,\tau_n) < \alpha_i ,\>\> \forall 1\leq i \leq c,
\end{split}
\end{equation}
where $\hat{Q}_i$ denotes the empirical risk over some finite dataset.
Directly solving for Eq.~\eqref{eq:constrained_opt} over the calibration data, however, would not necessarily yield a generalizable $(\hat{\tau}_1, \ldots, \hat{\tau}_n)$ with the desired $1-\delta$ probability. In other words, the true risk $Q_i(\hat{\tau}_1, \ldots, \hat{\tau}_n)$  over test data might exceed $\alpha_i$, possibly with high probability. 
In the following, we show how we can combine {a variant of the above optimization problem} with a \ac{FST} procedure in order to obtain valid bounds with $1-\delta$ probability.\looseness=-1

\subsection{Constructing the Pareto frontier}
FST considers a sequence of  hypothesis tests in some order, and terminates when the first null hypothesis (here, some $H_{\bm{\tau}}$) fails to be rejected. The efficiency of FST relies heavily on the ordering of hypothesis tests (e.g., those that are more likely to be rejected should be tested earlier).
We adopt the strategy of Split \ac{FST}~\citep{angelopoulos2021learn} for separating the calibration data into two disjoint subsets $\mathcal{D}_\mathrm{opt}$ and $ \mathcal{D}_\mathrm{testing}$ of sizes $m_1$ and $m_2$, respectively. The first split is used for defining an ordered sequence of configurations to test, while the second is used to conduct the hypothesis tests.\looseness=-1

We begin with defining a set of tests/configurations to consider. Accounting now for $k \geq 1$ free objectives, we  solve the following (unconstrained) {multi-objective} optimization problem over $\mathcal{D}_\mathrm{opt}$\looseness=-1
\begin{align}
    \argmin_{(\tau_1,\ldots, \tau_n) \in \mathcal{T}} \quad & \{\hat{Q}^\mathrm{opt}_{1}(\tau_1,\ldots,\tau_n),\ldots,\hat{Q}^\mathrm{opt}_{c+k}(\tau_1,\ldots,\tau_n) \}. 
    \label{eq:multi_obj}
\end{align}
If the objective functions compete with each other (e.g., accuracy vs. cost), a uniformly optimal solution may not exist. Instead, a solution is called non-dominated, or Pareto optimal, if there is no other point for which the value of one objective function can be improved without degrading another objective function. The set of Pareto optimal solutions, also known as the Pareto front, is given by:
\begin{equation}
\mathcal{T}_\textrm{par}=\{\bm{\tau}^{\prime }\in \mathcal{T}:\;\{\bm{\tau}^{\prime \prime }\in \mathcal{T}:\;\bm{\tau}^{\prime \prime }\succ \bm{\tau}^{\prime },\bm{\tau}^{\prime }\neq \bm{\tau}^{\prime \prime }\;\}=\emptyset \}.
\label{eq:pareto}
\end{equation}
where $\bm{\tau}^{\prime \prime }\succ \bm{\tau}^{\prime }$ denotes dominance in all objectives, i.e. $\hat{Q}_i^\mathrm{opt}(\bm{\tau}^{\prime \prime })<\hat{Q}^\mathrm{opt}_i(\bm{\tau}^{\prime }), \> \forall 1\leq i \leq c+k$. 
%
%
The main idea of our method is to perform testing only over the Pareto optimal set. This consists of the most ``promising'' configurations, and provides the best achievable trade-offs between all objective functions (with respect to $\mathcal{D}_\textrm{opt}$). As mentioned earlier, a major challenge when dealing with a large hyper-parameter space is that testing numerous configurations can quickly lead to a loss in statistical efficiency. We overcome this by focusing only  on the ``optimal'' region of the hyper-parameter space.\looseness=-1

\subsection{Ordering the Pareto frontier}
We now define an ordering for the set of tests/configurations on the Pareto frontier along which to conduct FST. We take the simple, but empirically effective, strategy of ordering $\bm{\tau}\in\mathcal{T}_\textrm{par}$ by their (combined)
estimated p-values $p^\textrm{opt}(\bm{\tau},\bm{\alpha})=\max_{1 \leq i \leq c}p(\hat{Q}_i^\mathrm{opt}(\bm{\tau}); \alpha_i, m_1)$, which we compute over $\mathcal{D}_\mathrm{opt}$ (the same data used to recover the Pareto frontier, but separate from testing data). Converting the $c$ constrained dimensions to p-values and taking their maximum, allows us to align and compare risks of different types (e.g., binary 0/1 error vs. precision/recall rates in $[0, 1]$), or that are controlled by different bounds (e.g., $\alpha_i \ll \alpha_j)$.
Intuitively, because we focus on the Pareto optimal set, for each configuration along this ordering there is no other configuration with a lower estimated p-value that is also expected to be dominant on the $k$ free objectives. Note that for $c > 1$, we can also (optionally) prune the frontier by considering only the subset
$\mathcal{T}'_\textrm{par}= \argmin_{\bm{\tau} \in \mathcal{T}_\textrm{par}} \left\{p^\textrm{opt}(\bm{\tau},\bm{\alpha}),\hat{Q}^\textrm{opt}_{c+1}(\bm{\tau}),\ldots,\hat{Q}^\textrm{opt}_{c+k}(\bm{\tau}) \right\}$.
In other words, since we only care about the \emph{maximum} p-value over the constrained objectives, we can ignore configurations in $\mathcal{T}_\mathrm{par}$ that only differ along the constrained dimensions $\{1, \ldots, c\}$, without affecting the free dimensions  $\{c + 1, \ldots, c +k\}$ or the combined p-value $p^\textrm{opt}(\bm{\tau},\bm{\alpha})$.\looseness=-1

\subsection{Applying fixed sequential testing on the Pareto frontier}
\begin{wrapfigure}{h}{0.45\textwidth}
\vspace*{-5ex}
\begin{center}
\includegraphics[width=0.455\textwidth]{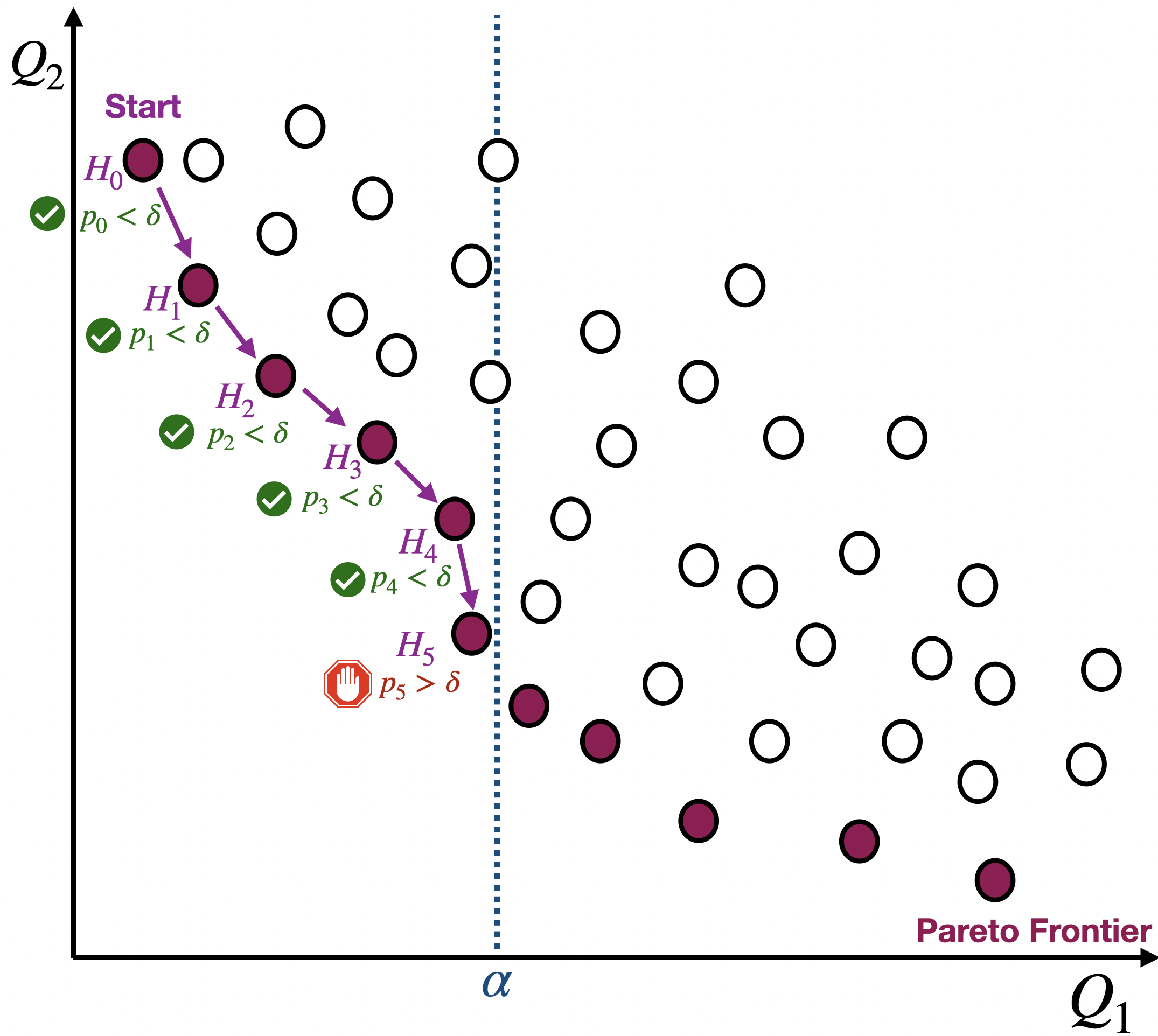}
  \end{center}
  \vspace*{-3ex}
  \caption{Pareto Testing with two objectives. $Q_1$ is controlled at $\alpha$ while $Q_2$ is minimized. FST is applied along the sequence of configurations on the Pareto front, ordered from high to low expected risk w.r.t. $Q_1$.\looseness=-1}
  \vspace*{-3.5ex}
  \label{fig:pareto_testing}
\end{wrapfigure}
After defining and ordering  $\mathcal{T}_{\mathrm{par}}$ over $\mathcal{D}_\mathrm{opt}$, we then proceed with \ac{FST} on $\mathcal{D}_\mathrm{testing}$ to identify a subset  $\mathcal{T}_r\subseteq \mathcal{T}_\textrm{par}$ of configurations for which we can successfully reject $H_{\bm{\tau}}$ (i.e., that we are confident are valid risk-controlling configurations). The final configuration selection is given by $\mathcal{T}^*= \argmin_{\bm{\tau} \in \mathcal{T}_r} \{\hat{Q}^\mathrm{testing}_{c+1}(\bm{\tau}),\ldots,\hat{Q}^\mathrm{testing}_{c+k}(\bm{\tau})\}$, which is a single configuration when $k=1$, or a set of configurations when $k>1$ (representing different possible trade-offs for the free objectives).

Our method is summarized in Algorithm~\ref{alg:pareto_test} and is illustrated in Figure~\ref{fig:pareto_testing}.
We call it \emph{Pareto Testing} for two reasons: (i) the method reduces to applying \ac{FST} over a path defined by the Pareto front of the multi-objective problem, and (ii) repeated testing for different $\bm{\alpha}$ limitations yields {a \emph{calibrated} Pareto frontier} with constraints on specific dimensions in the objective function space. It is straightforward to show that Pareto Testing achieves valid risk control.\looseness=-1

\begin{figure}[t!]
\centering
\begin{minipage}{1\linewidth}
\begin{algorithm}[H]
\small
\caption{\small Pareto Testing}\label{alg:pareto_test}
{\small \textbf{Definitions:} 
$f$ is a configurable model with $n$ thresholds $\bm{\tau}=(\tau_1,\ldots,\tau_n)$.  $\mathcal{D}_{\mathrm{cal}} = \mathcal{D}_\mathrm{opt} \cup \mathcal{D}_\mathrm{testing}$ is a calibration set of size $m$, split into optimization and (statistical) testing sets of size $m_1$ and $m_2$, respectively. $\{Q_1, \ldots, Q_{c+k}\}$ are objective functions. $\bm{\alpha} = \{\alpha_1, \ldots, \alpha_{c}\}$ are user-specified risk bounds for the first $c$ objectives. $\delta$ is the tolerance. $B$ is the max size of the Pareto front returned after multi-objective optimization.\looseness=-1
}
\begin{algorithmic}[1] 
\footnotesize
\vspace{2pt}
\Function{optimization}{$\mathcal{D}_\textrm{opt}, \bm{\alpha}$}  
\State Define the multi-objective problem, $\argmin_{\bm{\tau} \in \mathcal{T}} \{\hat{Q}^\textrm{opt}_{1}(\bm{\tau}), \ldots, \hat{Q}^\textrm{opt}_{c+k}(\bm{\tau})\}$.
\State Apply multi-objective optimization to identify up to $B$ configurations $\bm{\tau}^1,\ldots, \bm{\tau}^B$ in the Pareto front.
\State Compute p-values $p^\textrm{opt}(\bm{\tau},\bm{\alpha})= \max_{1\leq i \leq c} p(\hat{Q}^{\textrm{opt}}_{i}(\bm{\tau});\alpha_i, m_1)$.  
\State Order configurations according to $p^\textrm{opt}(\bm{\tau},\bm{\alpha})$ from low to high $\mathcal{T}_\textrm{opt}=(\bm{\tau}^{(1)},\ldots, \bm{\tau}^{(B)})$.\\
\hspace*{\algorithmicindent}\Return  $\mathcal{T}_\textrm{opt}$
\EndFunction
\vspace{2pt}
\Function{calibration}{$\mathcal{D}_\textrm{testing}$, 
 $\mathcal{T}_\textrm{opt}$, $\bm{\alpha}$, $\delta$}  
\State Compute objective: $\hat{Q}^{\textrm{testing}}_{i}(\bm{\tau})= \frac{1}{m_2}\sum_{(x,y)\in \mathcal{D}_\textrm{testing}}q_i(x,y; \bm{\tau}), 1\leq i\leq c$ for all $\bm{\tau}\in\mathcal{T}_\textrm{opt}$.
\State Compute p-values: $p^\textrm{testing}(\bm{\tau},\bm{\alpha}) = \max_{1\leq i \leq c} p(\hat{Q}^{\textrm{testing}}_{i}(\bm{\tau});\alpha_i, m_2)$.  
\State Apply \ac{FST}: $\mathcal{T}_r = \{\bm{\tau}^{(j)}: j< J \}, \>\> J = \min_j \{j: p^\textrm{testing}(\bm{\tau}^{(j)},\bm{\alpha})\geq \delta\}$
\State $\mathcal{T}^*= \argmin_{\bm{\tau} \in \mathcal{T}_r} \{\hat{Q}_{c+1}(\bm{\tau}),\ldots,\hat{Q}_{c+k}(\bm{\tau}) \}$\\
\hspace*{\algorithmicindent}\Return  $\mathcal{T}^*$
\EndFunction
\end{algorithmic}
\end{algorithm}
\end{minipage}
\vspace{-12pt}
\end{figure}

\begin{proposition}
\label{prop:validity}
Let $\mathcal{D}_{\mathrm{cal}} = \{(X_i, Y_i)\}_{i=1}^m$ be a set of i.i.d. random variables split into two disjoint subsets, $ \mathcal{D}_\mathrm{opt} $ and  $\mathcal{D}_\mathrm{testing}$. Let $p^\mathrm{testing}(\bm{\tau}, \bm{\alpha})$ be a valid p-value for a configuration $\bm{\tau}$, where $\mathbb{P}(p^\mathrm{testing}(\bm{\tau}, \bm{\alpha}) \leq u) \leq u$ for all $u \in [0, 1]$ over the draw of $\mathcal{D}_\mathrm{testing}$. Then all configurations in the output set $\mathcal{T}^*$ of Algorithm~\ref{alg:pareto_test} are also simultaneously $(\alpha, \delta)$-risk controlling configurations.\looseness=-1
\end{proposition}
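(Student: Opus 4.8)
The plan is to exploit the data split underlying Algorithm~\ref{alg:pareto_test}: the ordered sequence $\mathcal{T}_\mathrm{opt} = (\bm{\tau}^{(1)}, \ldots, \bm{\tau}^{(B)})$ returned by the optimization routine is a (measurable) function of $\mathcal{D}_\mathrm{opt}$ \emph{alone}, whereas the rejection decisions are made on the disjoint — hence independent — set $\mathcal{D}_\mathrm{testing}$. Accordingly, the first step is to condition on $\mathcal{D}_\mathrm{opt}$: given $\mathcal{D}_\mathrm{opt}$, the Pareto front~\eqref{eq:pareto}, its optional pruning, the p-value-based ordering, and thus the whole sequence $(\bm{\tau}^{(1)},\ldots,\bm{\tau}^{(B)})$ are deterministic, and the calibration routine reduces \emph{exactly} to \ac{FST} run on a \emph{fixed} list of null hypotheses $H_{\bm{\tau}^{(1)}},\ldots,H_{\bm{\tau}^{(B)}}$ with p-values computed from an independent i.i.d.\ sample.

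Two ingredients then suffice. First, by the argument behind Eq.~\eqref{eq:max_p_val} (proved in Appendix~\ref{sec:math}), $p^\mathrm{testing}(\bm{\tau},\bm{\alpha}) = \max_{1\le i\le c} p(\hat{Q}^\mathrm{testing}_i(\bm{\tau});\alpha_i,m_2)$ is a valid p-value for the combined null $H_{\bm{\tau}}\colon \exists\,i\le c \text{ with } Q_i(\bm{\tau})>\alpha_i$; equivalently, writing $\mathcal{T}_\mathrm{valid}\triangleq\{\bm{\tau}\in\mathcal{T}:Q_i(\bm{\tau})\le\alpha_i\ \forall\,i\le c\}$, it is super-uniform over the draw of $\mathcal{D}_\mathrm{testing}$ whenever $\bm{\tau}\notin\mathcal{T}_\mathrm{valid}$. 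Since, conditionally on $\mathcal{D}_\mathrm{opt}$, each $\bm{\tau}^{(j)}$ is a fixed configuration and $\mathcal{D}_\mathrm{testing}\perp\mathcal{D}_\mathrm{opt}$, this super-uniformity persists conditionally on $\mathcal{D}_\mathrm{opt}$. Second, the standard fixed-sequence argument: let $j^\star$ be the smallest index with $\bm{\tau}^{(j^\star)}\notin\mathcal{T}_\mathrm{valid}$ (if no such index exists, $\mathcal{T}_r\subseteq\mathcal{T}_\mathrm{valid}$ automatically). By the stopping rule in Algorithm~\ref{alg:pareto_test}, $\mathcal{T}_r=\{\bm{\tau}^{(j)}:j<J\}$ with $J$ the first index whose p-value is at least $\delta$; hence a false rejection (some rejected $\bm{\tau}$ outside $\mathcal{T}_\mathrm{valid}$) occurs only if $J>j^\star$, which in turn requires $p^\mathrm{testing}(\bm{\tau}^{(j^\star)},\bm{\alpha})<\delta$. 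Conditional super-uniformity then gives $\mathbb{P}\big(p^\mathrm{testing}(\bm{\tau}^{(j^\star)},\bm{\alpha})<\delta\mid\mathcal{D}_\mathrm{opt}\big)\le\delta$, so $\mathbb{P}\big(\mathcal{T}_r\subseteq\mathcal{T}_\mathrm{valid}\mid\mathcal{D}_\mathrm{opt}\big)\ge1-\delta$.

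The last step removes the conditioning and passes to the output. Taking expectation over $\mathcal{D}_\mathrm{opt}$ yields $\mathbb{P}(\mathcal{T}_r\subseteq\mathcal{T}_\mathrm{valid})\ge1-\delta$ over the draw of $\mathcal{D}_\mathrm{cal}=\mathcal{D}_\mathrm{opt}\cup\mathcal{D}_\mathrm{testing}$. Because $\mathcal{T}^*=\argmin_{\bm{\tau}\in\mathcal{T}_r}\{\hat{Q}^\mathrm{testing}_{c+1}(\bm{\tau}),\ldots,\hat{Q}^\mathrm{testing}_{c+k}(\bm{\tau})\}\subseteq\mathcal{T}_r$ — this final selection only inspects the \emph{free} objectives and can merely shrink the set — the event $\{\mathcal{T}_r\subseteq\mathcal{T}_\mathrm{valid}\}$ implies $\{\mathcal{T}^*\subseteq\mathcal{T}_\mathrm{valid}\}$, i.e.\ with probability at least $1-\delta$ every $\bm{\tau}\in\mathcal{T}^*$ satisfies $Q_i(\bm{\tau})\le\alpha_i$ for all $i\le c$ simultaneously. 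By Definition~\ref{def:cont} this is exactly the assertion that every configuration in $\mathcal{T}^*$ is a (simultaneous) $(\alpha,\delta)$-risk controlling configuration.

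I expect the one genuinely delicate point to be the conditioning/independence bookkeeping of the first two paragraphs: the ordering of the hypotheses is \emph{data-dependent}, which in general can destroy \ac{FST}'s \ac{FWER} control, and the argument goes through only because the ordering is $\mathcal{D}_\mathrm{opt}$-measurable while the testing p-values are evaluated on the independent split $\mathcal{D}_\mathrm{testing}$ (the Split \ac{FST} device of \citet{angelopoulos2021learn}). The remaining components — validity of the max-combined p-value and the prefix/stopping structure of fixed-sequence testing — are respectively already established in the paper and standard.
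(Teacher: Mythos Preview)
Your proof is correct and follows essentially the same approach as the paper's: both invoke the independence of $\mathcal{D}_\mathrm{opt}$ and $\mathcal{D}_\mathrm{testing}$ so that the Pareto-front ordering is fixed with respect to the testing p-values, appeal to Lemma~\ref{lemma:pvalue} for super-uniformity of the max-combined p-value, run the standard first-true-null \ac{FST} argument to bound the \ac{FWER} by $\delta$, and finish by noting $\mathcal{T}^*\subseteq\mathcal{T}_r$. Your version is simply more explicit about the conditioning-on-$\mathcal{D}_\mathrm{opt}$ bookkeeping that the paper compresses into a single sentence.
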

The proof of Proposition~\ref{prop:validity}, given in Appendix~\ref{sec:math}, follows from Split FST.
Note that for $k>1$, the chosen Pareto set $\mathcal{T}^*$ contains configurations that are simultaneously valid. We are therefore free to use any $\bm{\tau}_i^* \in \mathcal{T}^*$, as well as any randomly \emph{combined} configuration in the convex hull of $\mathcal{T}^*$ defined as follows. Consider a randomized strategy, where for each test point $(X,Y)$, we sample the configuration $\bm{\tau}_j^* \in \mathcal{T}^*$ to use with probability  $\Delta_j$, where $\Delta\in\mathbb{R}^{|\mathcal{T}^*|}$ is a point in the $|\mathcal{T}^*|-1$ dimensional probability simplex. The resulting combination is also $(\alpha, \delta)$-risk controlling, and allows for different (average) outcomes on the $k$ free objectives.\looseness=-1  
\begin{corollary}
\label{corr:convex_hull}
Any randomized time-shared use of configurations in $\mathcal{T}^*$ is $(\alpha, \delta)$-risk controlling.\looseness=-1
\end{corollary}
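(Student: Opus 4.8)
The plan is to reduce the claim to the \emph{simultaneous} validity guarantee already established in Proposition~\ref{prop:validity}, combined with linearity of expectation. First I would fix notation for the time-shared strategy: write $\mathcal{T}^* = \{\bm{\tau}^*_1, \ldots, \bm{\tau}^*_{|\mathcal{T}^*|}\}$, let $\Delta$ be a point in the probability simplex over $\mathcal{T}^*$, and let $J$ denote the index sampled at a fresh test point with $\mathbb{P}(J = j) = \Delta_j$, drawn independently of the calibration data $\mathcal{D}_\mathrm{cal}$ (and i.i.d.\ across test points). By definition, the risk of the randomized configuration on the $i$-th objective is $Q_i^{\Delta} \triangleq \mathbb{E}\big[q_i(X, Y; \bm{\tau}^*_J)\big]$, where the expectation is over the new $(X,Y) \sim \pdata$ and over $J$.

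Next I would apply the tower property, conditioning on $J$ and using $J \perp (X,Y)$, to obtain $Q_i^{\Delta} = \sum_{j} \Delta_j\, \mathbb{E}[q_i(X,Y;\bm{\tau}^*_j)] = \sum_{j} \Delta_j\, Q_i(\bm{\tau}^*_j)$; that is, the risk of the mixture equals the $\Delta$-weighted average of the per-configuration risks. This identity holds deterministically given $\mathcal{D}_\mathrm{cal}$, which determines $\mathcal{T}^*$ (and possibly also $\Delta$, if the weights are optimized over the free objectives on $\mathcal{D}_\mathrm{testing}$). Then I would invoke Proposition~\ref{prop:validity}: it yields an event $E$ with $\mathbb{P}(E) \geq 1-\delta$ over the draw of $\mathcal{D}_\mathrm{cal}$ on which $Q_i(\bm{\tau}^*_j) \leq \alpha_i$ holds for \emph{every} $j$ and every $1 \leq i \leq c$ at once. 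On $E$, since $\Delta_j \geq 0$ and $\sum_j \Delta_j = 1$, we get $Q_i^{\Delta} = \sum_j \Delta_j Q_i(\bm{\tau}^*_j) \leq \sum_j \Delta_j \alpha_i = \alpha_i$ simultaneously for all $i$. Hence $\mathbb{P}\big(Q_i^{\Delta} \leq \alpha_i,\ \forall\, 1 \leq i \leq c\big) \geq \mathbb{P}(E) \geq 1-\delta$, which is exactly the condition of Definition~\ref{def:cont}.

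The only real subtlety I would be careful to spell out is that the argument genuinely needs the \emph{joint} form of the guarantee from Proposition~\ref{prop:validity} (all of $\mathcal{T}^*$ valid on a single high-probability event), not merely a marginal per-configuration statement: a union bound over $\mathcal{T}^*$ would be too lossy, but Split FST already delivers the simultaneous version. I would also note explicitly that $\Delta$ may be chosen as a function of $\mathcal{D}_\mathrm{cal}$ without harm, because on $E$ the bound $Q_i^{\Delta} \leq \alpha_i$ holds for \emph{every} admissible weight vector; and that the per-test-point randomness realizing $J$ must be independent of $\mathcal{D}_\mathrm{cal}$ for the conditioning step to be valid. There is no heavy computation here — it is essentially a one-line convexity argument once the randomized risk $Q_i^{\Delta}$ is correctly identified as a mixture.
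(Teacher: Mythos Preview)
Your proposal is correct and follows essentially the same route as the paper: express the randomized risk as the convex combination $\sum_j \Delta_j Q_i(\bm{\tau}^*_j)$ and then invoke the simultaneous validity event from Proposition~\ref{prop:validity} to bound it by $\alpha_i$. The only cosmetic difference is that the paper inserts an intermediate $\max_{j'} Q_i(\bm{\tau}^*_{j'})$ step before applying the event, whereas you bound each summand directly; your added remarks about needing the joint (not marginal) guarantee and about $\Delta$ being allowed to depend on $\mathcal{D}_\mathrm{cal}$ are accurate and a bit more explicit than the paper's version.
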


\section{Adaptive multi-dimensional pruning}
\label{sec:muli_pruning}
We now turn to a concrete application of our method, in which we
 demonstrate its effectiveness for reliably accelerating Transformer models~\citep{vaswani2017attention}. 
Here we pair each pruning ``dimension'' with a score function that estimates the relative importance of each prunable element in that dimension. By thresholding scores, we can obtain different versions of the model with different  performance characteristics.
We  assume a $K$-layer Transformer model with $W$ attention heads per layer, and $L(X)$ input tokens (see \citet{vaswani2017attention} for a complete description of the Transformer model). We then consider the following pruning dimensions (see also Appendix~\ref{sec:pruning_details} for details):\looseness=-1
\begin{enumerate}[leftmargin=*, noitemsep]
    \item \textbf{Token pruning.} 
    We assign each token with an (after-the-fact) importance score, based on the gradient of the output probability w.r.t. the token embedding. To determine token importance at run-time, we predict the score at each layer~\citep{modarressi2022adapler}, then remove tokens with estimated scores bellow a threshold $\tau^\textrm{tok}$, yielding a sequence of size $L_j(X;\tau^\textrm{tok})$ at the $j$-th layer.\looseness=-1\vspace{2pt}
    \item \textbf{Early exiting.} We attach a softmax classifier head to each layer, and exit whenever the predictive entropy of its output is below a  threshold $\tau^\textrm{layer}$.  The exit layer is denoted as $K_\textrm{exit}(X; \tau^\textrm{layer})$.\looseness=-1\vspace{2pt}
    \item \textbf{Head pruning.} Similar to token pruning, we compute an importance score for each attention head per layer by the gradient of the loss over the validation set (from training) w.r.t the head hidden representation, following \citet{michel2019sixteen}. $W_j(\tau^\textrm{head})$ denotes the number of retained heads in layer $j$. Note that this is fixed for all inputs, unlike the previous mechanisms.\looseness=-1
\end{enumerate}

We also define several practical objective functions that can be either controlled or minimized by our proposed Pareto Testing method. The full model without pruning is denoted as $f(\cdot; \bm{\tau}_0)$.\looseness=-1

\textbf{Relative computational cost.} We define the relative computations cost by the ratio between the computational cost of the pruned model and the computational cost of the full model:
\begin{equation}
    Q_\textrm{cost}( \bm{\tau})=\mathbb{E}\Bigg[\frac{\sum_{j=1}^{K_\textrm{exit}(X; \tau^\textrm{layer})}W_j(\tau^\textrm{head})\cdot L_j(X;\tau^\textrm{tok})^2}{\sum_{j=1}^{K}W\cdot L(X)^2}\Bigg].
\label{eq:cost}
\end{equation}
Note that the exact definition may vary according to the specific architecture, hardware, or application. Eq.~\eqref{eq:cost} reflects a simplistic definition that incorporates  a quadratic dependency on the sequence length due to the attention mechanism, and a linear dependency on the number of attention heads. We also consider total FLOPs (floating-point operations) per forward pass. \looseness=-1

\textbf{Relative accuracy reduction.} Speeding up the run-time of a model can also degrade its accuracy. Define the random variable $D(X,Y; \bm{\tau})=\mathbf{1}\{f(X; \bm{\tau}_0)=Y\}-\mathbf{1}\{f(X; \bm{\tau})=Y\}$, which is $0$ when both model predictions are the same, $1$ when the full model is correct while the pruned model is incorrect, and -$1$ if the opposite is true. We define the relative accuracy reduction  as:\looseness=-1
\begin{equation}
    Q_\textrm{acc}(\bm{\tau}) = \mathbb{E}\left[D(X,Y; \bm{\tau})\right] =
    \mathbb{E}\left[\mathbf{1}\{f(X;\bm{\tau}_0)=Y\}\right] - \mathbb{E}\left[
\mathbf{1}\{f(X; \bm{\tau})=Y\}\right],
\label{eq:acc}
\end{equation}

i.e, the difference in accuracy between the full and pruned models. In order to exploit p-values derived from confidence bounds that assume the risk is in~$[0,1]$~\cite{angelopoulos2021learn}, we define $D(X,Y; \bm{\tau})'=\left[D(X,Y; \bm{\tau})\right]_+$, which differs only for the rare event that the pruned model is correct while the full model is not, and is more restrictive since $\mathbb{E}\left[D(X,Y; \bm{\tau})\right]\leq\mathbb{E}\left[D'(X,Y; \bm{\tau})\right]$.\looseness=-1

\textbf{Worst-class relative accuracy reduction.} 
In some cases, we would like to control the worst-class accuracy, or equivalently, that every class accuracy reduction is controlled by the same level:
\begin{equation}
    \mathbb{E}\left[D'(X,Y; \bm{\tau})\mid Y=y\right]\leq \alpha, \>\> \forall y \in \mathcal{Y}.
\label{eq:class_conditioned}   
\end{equation}
which is equivalent to (see Appendix~\ref{sec:math}):
\begin{equation}
 Q_\textrm{acc-class}(y;\bm{\tau}) = \mathbb{E}\left[D'(X,Y; \bm{\tau})\cdot \mathbf{1}\{Y=y\} + \alpha \cdot \mathbf{1}\{Y\neq y\} \right] \leq \alpha , \>\> \forall y \in \mathcal{Y}.
\label{eq:acc_class}
\end{equation}
Note that this adds an additional $|\mathcal{Y}|$ objectives (that can still be solved efficiently, see Appendix~\ref{sec:math}).

\textbf{Selective classification abstention rate.} Consider a selective classification problem, where the model is allowed to abstain from making a prediction when it is unsure, based on some threshold $\lambda$ on the model's confidence (we use the probability of the predicted class $\max_y f(X,y;\bm{\tau})$). In this case, we re-define the relative accuracy and cost reductions to be conditioned on making a prediction. We also introduce abstention rate objective (e.g., abstain from prediction at most $20\%$ of the time):\looseness=-1 
\begin{equation}
    Q_\textrm{abstention-rate}( \bm{\tau},\lambda)=\mathbb{E}\big[\mathbf{1}\big\{\max_y f(X,y;\bm{\tau})<\lambda\big\}\big].
\label{eq:abstention}
\end{equation}

\section{Experiments}
\label{sec:results}

\textbf{Experimental setup.} we test our method over five text classification tasks of varied difficulty levels: \textbf{IMDB} \citep{maas2011learning}, \textbf{AG News} \citep{zhang2015character}, \textbf{QNLI} \citep{rajpurkar2016squad}, \textbf{QQP}, \textbf{MNLI} \citep{williams2018broad}. 
 We use a BERT-base model~\citep{devlin2018bert} with $K=12$ layers and $W=12$ heads per layer, and attach a prediction head and a token importance predictor per layer.\looseness=-1

\begin{figure}[t]
\begin{center}
\vspace{-0.5cm} 
\begin{subfigure}[b]{0.265\textwidth}
 \centering
 \includegraphics[width=\textwidth]{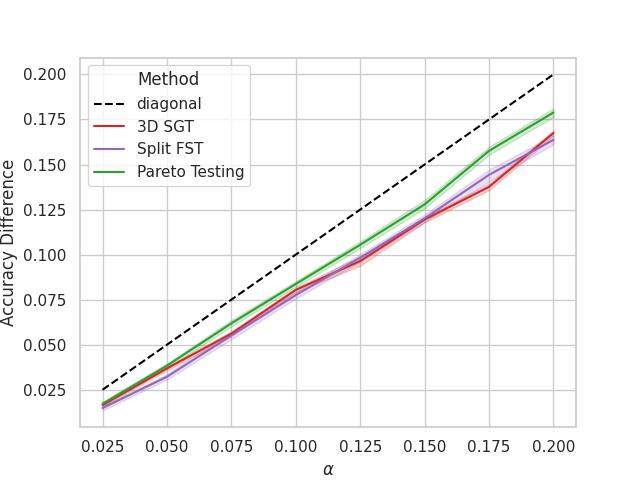}
 \vspace{-0.45cm} 
\end{subfigure}
\hspace{-1.5em}
\begin{subfigure}[b]{0.265\textwidth}
 \centering
 \includegraphics[width=\textwidth]{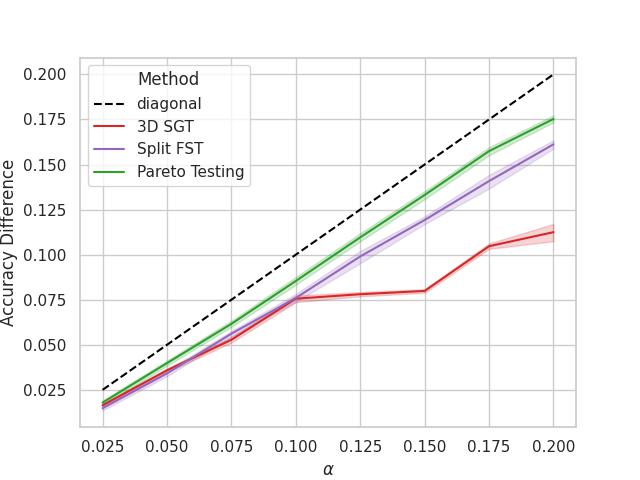}
\vspace{-0.45cm} 
\end{subfigure}
\hspace{-1.5em}
\begin{subfigure}[b]{0.265\textwidth}
 \centering
 \includegraphics[width=\textwidth]{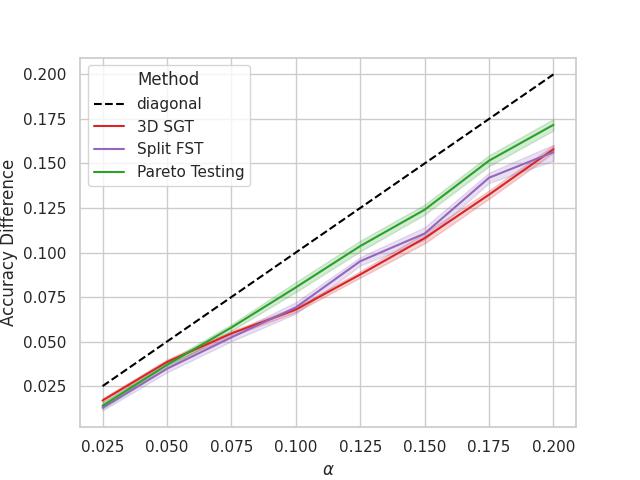}
\vspace{-0.45cm} 
\end{subfigure}
\hspace{-1.5em}
\begin{subfigure}[b]{0.27\textwidth}
 \centering
 \includegraphics[width=\textwidth]{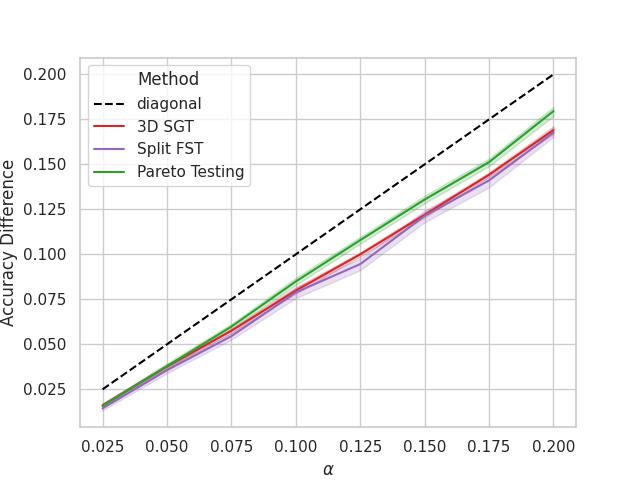}
 \label{fig:qqp_acc_cost}
\vspace{-0.45cm} 
\end{subfigure}
\begin{subfigure}[b]{0.265\textwidth}
 \centering
 \includegraphics[width=\textwidth]{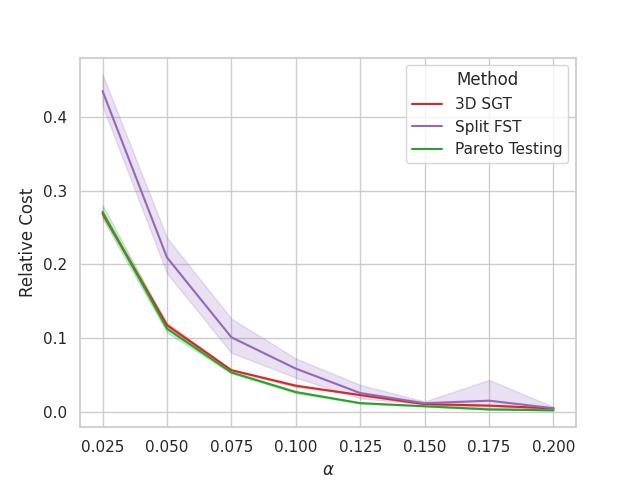}
 \caption{IMDB}
\end{subfigure}
\hspace{-1.5em}
\begin{subfigure}[b]{0.265\textwidth}
 \centering
 \includegraphics[width=\textwidth]{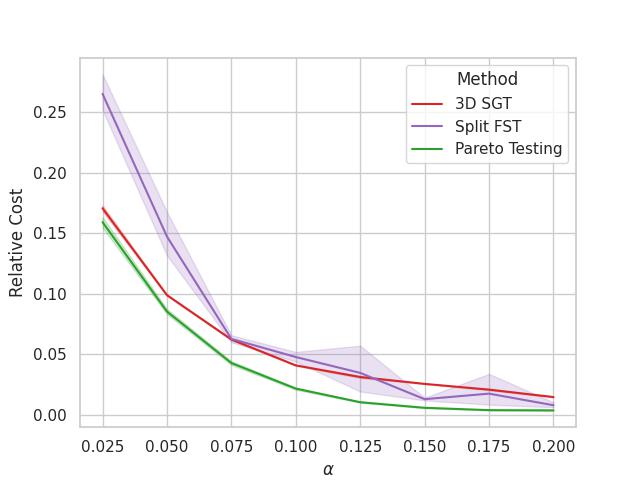}
 \caption{AG News}
\end{subfigure}
\hspace{-1.5em}
\begin{subfigure}[b]{0.265\textwidth}
 \centering
 \includegraphics[width=\textwidth]{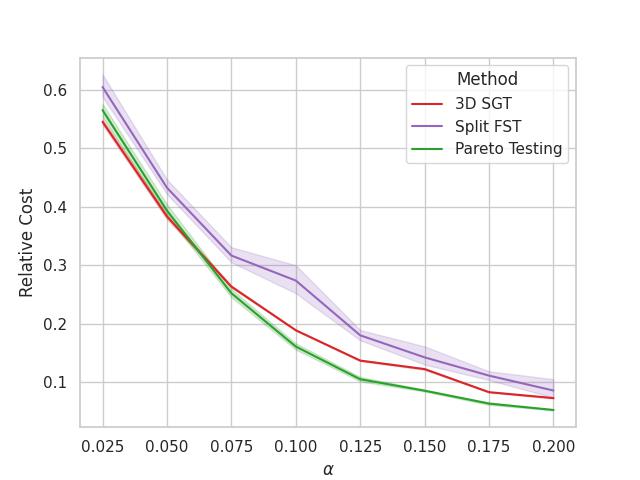}
 \caption{QNLI}
\end{subfigure}
\hspace{-1.5em}
\begin{subfigure}[b]{0.265\textwidth}
 \centering
 \includegraphics[width=\textwidth]{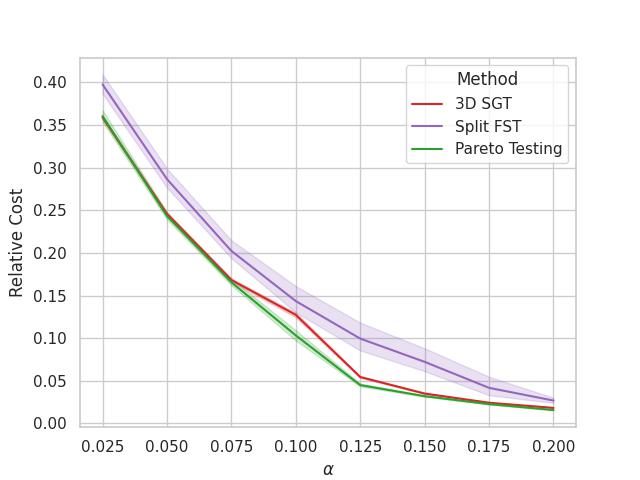}
 \caption{QQP}
 \end{subfigure}
\end{center}
 \vspace{-0.08cm} 
\caption{Resuts for two objectives (one controlled, one free). Relative accuracy reduction is controlled by $\alpha\in\{0.025, 0.5,\ldots, 0.2\}$, $\delta=0.1$ while computational cost is minimized. The top plots show accuracy difference while the bottom shows relative cost (computed over $100$ random  trials).\looseness=-1}  
\label{fig:res_two_obj}
\vspace{-15pt} 
\end{figure}

\textbf{Baselines and Evaluation.} We present both risk controlling and non-risk-controlling baselines. 
\underline{Non-risk-controlling baselines:} (1) \textbf{$\alpha$-constrained}, the solution to the  constrained optimization problem in Eq.~\eqref{eq:constrained_opt}; (2)
\textbf{$(\alpha,\delta)$-constrained}, the same as before, but with constraints defined over p-values, which is equivalent to testing without FWER control. \underline{Risk-controlling baselines:} 
(3) \textbf{3D SGT}, \ac{SGT} defined over a 3D graph, see Algorithm~\ref{alg:graph3d_test}; (4) \textbf{Split \ac{FST}}, the split method proposed in \citet{angelopoulos2021learn}, where a set of hypotheses is ordered by increasing estimated p-values.
For fairness, each baseline (including our method) operates over the same predefined grid of configurations. We use $6480$ configurations in total ($18$ head, $20$ token, and $18$ early-exit thresholds). Note that  the recovered Pareto front in this case is restricted to points in this grid. We also show the results obtained while using a multi-objective optimizer for determining the Pareto front to demonstrate the actual \emph{computationally} efficient implementation of our method (rather than brute-force exploration of the grid). We repeat each experiment over different splits of testing and calibration data (50-100 runs in total), and report the average over all splits for the configurations selected by each method.\looseness=-1 

\textbf{Two objectives (one controlled, one free).}
We start with a two-objective scenario, where we wish to control the accuracy reduction (Eq.~\eqref{eq:acc}), while minimizing the cost (Eq.~\eqref{eq:cost}). 
The average accuracy reduction and relative cost are presented in Fig.~\ref{fig:res_two_obj} for the risk controlling baselines.
We observe that the proposed method obtains the lowest cost among the risk controlling baselines for all $\alpha$ values and across all tasks. In particular, it can be seen that Split \ac{FST} obtains slightly looser control of relative accuracy reduction, but higher relative computational costs compared to Pareto Testing. Ordering by p-values alone does not take into account scenarios where several configurations have similar accuracy, but vary in costs, while the proposed method optimizes the selection and ordering of configurations in both dimensions. We also see that 3D-SGT performs well for low $\alpha$ values, but often becomes worse as $\alpha$ increases. A possible factor is that as $\alpha$ increases, 3D testing is allowed to explore more of the 3D graph, but does so inefficiently---leading to overall lower rejection rates.    
Figure~\ref{fig:res_control} shows the difference between the risk controlling and the non-risk-controlling baselines in terms of satisfying  Definition~\ref{def:cont}. In non-risk controlling baselines (left), the risk exceeds $\alpha$ more frequently than the allowed tolerance level $\delta=0.1$. By contrast and as expected, we see that all the risk controlling baselines (right) are always below the tolerance level. \looseness=-1

\textbf{Three objectives (two controlled, one free).} We study a scenario with three objectives on MNLI, where we control both the average accuracy (Eq.~\eqref{eq:acc}) and the worst-class accuracy (Eq.~\eqref{eq:acc_class}) while minimizing cost (Eq.~\eqref{eq:cost}). We vary the values of $\alpha_1$ for average accuracy and set $\alpha_2=0.15$ for worst accuracy. Figure~\ref{fig:res_three_obj}  reports the results of the three objective functions. It can be seen that when $\alpha_1$ is small, testing is dominated by average accuracy (worst accuracy is not tight), and as $\alpha_1$ increases, worst accuracy becomes dominant and average accuracy becomes loosely controlled. Here too we see that Pareto Testing obtains improved cost reduction with respect to the other baselines.\looseness=-1

\begin{figure}[t]
\begin{center}
\begin{subfigure}[b]{0.32\textwidth}
 \includegraphics[width=\textwidth]{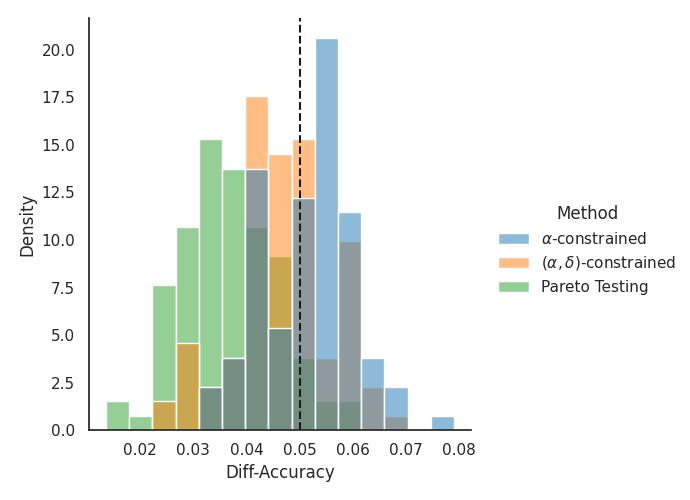}
\end{subfigure}
\begin{subfigure}[b]{0.32\textwidth}
 \includegraphics[width=\textwidth]{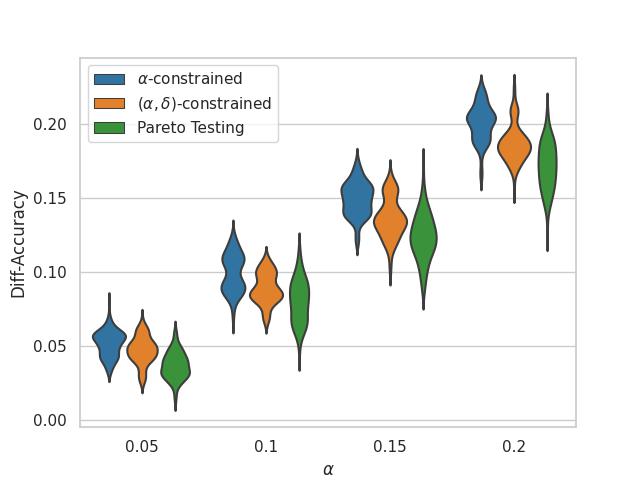}
\end{subfigure}
\begin{subfigure}[b]{0.32\textwidth}
 \includegraphics[width=\textwidth]{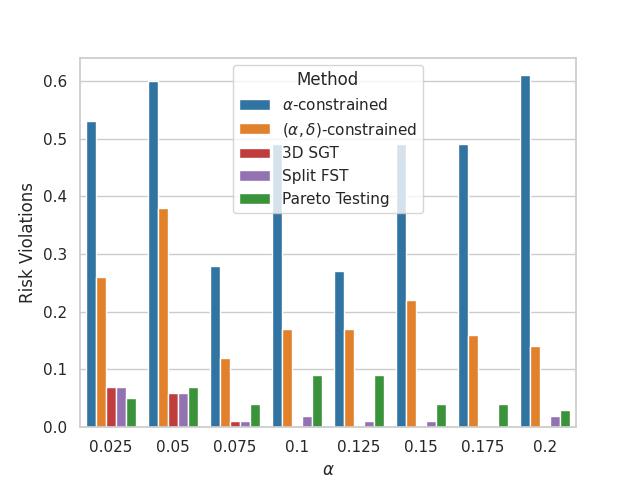}
\end{subfigure}
\end{center}
\vspace{-5pt}
\caption{Two-objectives, QNLI ($100$ splits). Acc. reduction is controlled, cost is minimized. Left: histogram of acc. reduction, $\alpha=0.05$; middle: violin plots of acc. reduction; right: risk violations.\looseness=-1}
\label{fig:res_control}
\vspace{-12pt}
\end{figure}

\begin{figure}[t]
\begin{center}
\begin{subfigure}[b]{0.32\textwidth}
 \includegraphics[width=\textwidth]{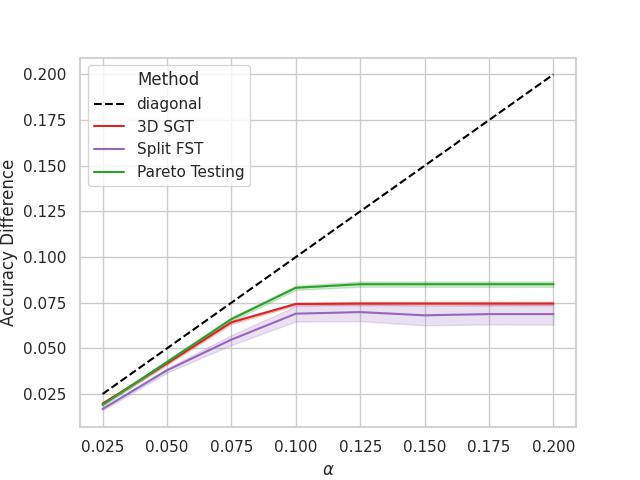}
\end{subfigure}
\begin{subfigure}[b]{0.32\textwidth}
 \includegraphics[width=\textwidth]{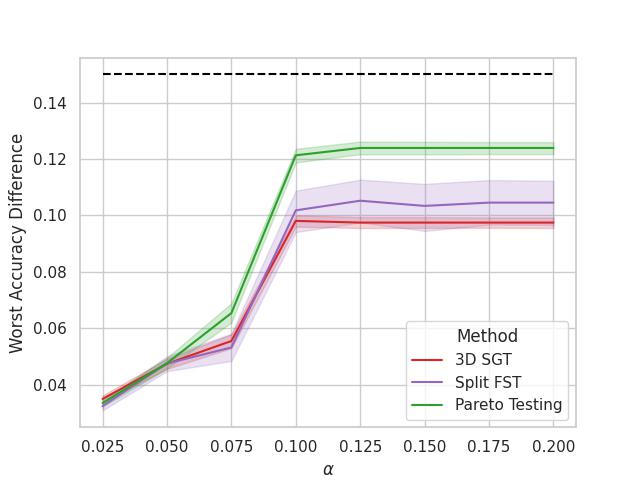}
\end{subfigure}
\begin{subfigure}[b]{0.32\textwidth}
 \includegraphics[width=\textwidth]{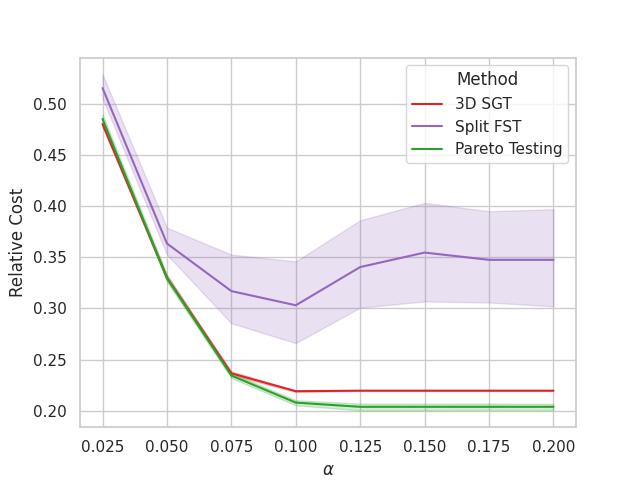}
\end{subfigure}
\end{center}
\caption{Three-objectives, MNLI ($100$ random splits): average accuracy is controlled by $\alpha_1\in\{0.025, 0.5,\ldots, 0.2\}$, worst accuracy is controlled by $\alpha_2=0.15$, $\delta=0.1$, and cost is minimized.\looseness=-1}
\label{fig:res_three_obj}
\vspace{-20pt}
\end{figure}

\textbf{Results with an off-the-shelf  optimizer.}
On the first scenario with accuracy control and cost minimization,  in Figure~\ref{fig:res_preto_opt} we show a comparison between the proposed method with grid (blue) and multi-objective optimizer (red to yellow) with different number of function evaluations. For multi-objective optimizer, we used an implementation~\citep{lindauer2022smac3} of ParEGO optimization algorithm~\citep{knowles2006parego, cristescu2015surrogate}. 
We observe that even with a small number of evaluations (e.g., $50$), we obtain reasonable results, which further improve as the number of evaluations increases. The grid option is performing better for certain $\alpha$ values, but it requires significantly more function evaluations. A more in depth analysis of how the multi-objective optimization method and the allowed number of evaluations influence testing efficiency is left for future work.\looseness=-1 


\textbf{Additional results.} We briefly highlight a number of results contained in Appendix~\ref{sec:more_res}. On the same ``accuracy control'' setting, we report FLOPs saved, as an alternative measure for cost improvement. In addition, we show flipped results for controlling cost while minimizing the relative loss in accuracy. We also explore a selective prediction setting with three objectives when one is controlled while  two are free. Specifically, we control the selective accuracy loss (Eq.~\eqref{eq:select_acc}), while minimizing both the selective cost (Eq.~\eqref{eq:select_cost}) and the abstention rate (Eq.~\eqref{eq:abstention}). Figure~\ref{fig:res_three_obj_cost_coverage} reports the cost and abstention rate for the chosen configurations by either the proposed method and Split \ac{FST}. It can be seen that Pareto Testing selects a richer set of configurations that conveys better cost-coverage trade-offs.\looseness=-1  
\section{Conclusion}
Deployment of machine learning models in the real world can frequently demand precise guarantees that certain constraints will be satisfied, together with good empirical performance on other  objectives of interest. In this work, we presented  \emph{Pareto Testing}, a two-stage procedure for multiple risk control combined with multi-objective optimization. In the first stage, Pareto Testing relaxes all constraints, and converts the problem to a standard multi-objective optimization format that can be efficiently solved with off-the-shelf optimizers to yield a Pareto frontier of hyper-parameter configurations affecting model performance. In the second stage, this Pareto frontier is filtered via {multiple hypothesis testing} to identify configurations that simultaneously satisfy the desired risk constraints with (specifiable) high probability---while also being effective solutions for the free objectives. Bridging theory and practice, we demonstrated the effectiveness of our method for reliable and efficient adaptive computation of Transformer models on several text classification tasks under various conditions.\looseness=-1

\bibliography{pareto_testing_iclr2023}
\bibliographystyle{iclr2023_conference}

\appendix
\counterwithin{figure}{section}
\counterwithin{table}{section}
\counterwithin{algorithm}{section}
\section{Mathematical details}
\label{sec:math}

We present the proofs for our theoretical claims. 

\subsection{Max p-value for multiple risks}
First we re-state and re-prove that taking the maximum p-value is also a valid p-value.

\begin{lemma}
\label{lemma:pvalue}

Let $p_i(\bm{\tau},\bm{\alpha})$ be a p-value for $H_{\bm{\tau},i}: Q_i(\bm{\tau}) > \alpha_i$, for each $i\in\{ 1,\ldots, c\}$. Define $p(\bm{\tau},\bm{\alpha}) := \max_{1\leq i \leq c} 
 p_i(\bm{\tau},\alpha_i)$. Then, for all $\bm{\tau}$ such
that $H_{\bm{\tau}}: \exists i \text{ where } Q_i(\bm{\tau}) > \alpha_i$ holds, we have:
\begin{equation}
\mathbb{P}\left(p(\bm{\tau},\bm{\alpha}) \leq u\right)\leq u
\end{equation}
\end{lemma}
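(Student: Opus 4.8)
The plan is to show that the event $\{p(\bm{\tau},\bm{\alpha}) \leq u\}$ is contained in the event $\{p_{i^*}(\bm{\tau},\alpha_{i^*}) \leq u\}$ for a well-chosen index $i^*$, and then invoke the validity of that single p-value. First I would fix an arbitrary $\bm{\tau}$ for which $H_{\bm{\tau}}$ holds, i.e.\ there exists some index $i$ with $Q_i(\bm{\tau}) > \alpha_i$; call one such index $i^*$ (if there are several, pick any). The key observation is that for this $i^*$, the null hypothesis $H_{\bm{\tau}, i^*}: Q_{i^*}(\bm{\tau}) > \alpha_{i^*}$ is true, so by assumption $p_{i^*}(\bm{\tau}, \alpha_{i^*})$ is a valid p-value: $\mathbb{P}(p_{i^*}(\bm{\tau}, \alpha_{i^*}) \leq u) \leq u$ for all $u \in [0,1]$.

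Next I would use the defining property of the maximum: since $p(\bm{\tau},\bm{\alpha}) = \max_{1\leq i \leq c} p_i(\bm{\tau},\alpha_i) \geq p_{i^*}(\bm{\tau},\alpha_{i^*})$ pointwise, whenever $p(\bm{\tau},\bm{\alpha}) \leq u$ we also have $p_{i^*}(\bm{\tau},\alpha_{i^*}) \leq u$. Hence as events,
\begin{equation}
\{p(\bm{\tau},\bm{\alpha}) \leq u\} \subseteq \{p_{i^*}(\bm{\tau},\alpha_{i^*}) \leq u\},
\end{equation}
and by monotonicity of probability together with the validity of $p_{i^*}$,
\begin{equation}
\mathbb{P}\left(p(\bm{\tau},\bm{\alpha}) \leq u\right) \leq \mathbb{P}\left(p_{i^*}(\bm{\tau},\alpha_{i^*}) \leq u\right) \leq u.
\end{equation}
This holds for every $u \in [0,1]$, which is exactly the claimed inequality, so the proof is complete.

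There is essentially no hard part here; the statement is a standard "intersection-of-hypotheses via max-of-p-values" argument. The one place to be slightly careful is the logical direction: the claim only needs to hold for $\bm{\tau}$ such that $H_{\bm{\tau}}$ is true (the composite null), and the proof crucially uses that $H_{\bm{\tau}}$ true guarantees \emph{at least one} of the individual nulls $H_{\bm{\tau},i}$ is true — which is enough because the max dominates that particular coordinate's p-value. It does not matter that the other individual nulls may be false (their p-values need not be valid), precisely because we only lower-bound the max by the single valid coordinate. I would state this explicitly to avoid any confusion about why taking the max (rather than, say, the min) is the correct combination for controlling a union-type null.
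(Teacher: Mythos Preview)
Your proof is correct and follows essentially the same approach as the paper: both use that the maximum p-value dominates the p-value(s) corresponding to the true individual null(s), then invoke super-uniformity there. The only cosmetic difference is that the paper carries along the entire set $\mathcal{I}$ of true nulls and bounds by $\max_{i\in\mathcal{I}}\mathbb{P}(p_i\leq u)$, whereas you pick a single $i^*\in\mathcal{I}$ directly; your version is, if anything, slightly more streamlined.
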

\begin{proof}
Let $\mathcal{I}\subseteq \{1,\ldots,c\}$ be the set of all true null hypothesis at $\bm{\tau}$. 
We have: 
\begin{equation}
\begin{split}
\mathbb{P}\Big(p(\bm{\tau},\bm{\alpha})\leq u\Big)&\leq \mathbb{P}\left(\max_{i\in \mathcal{I}}p_i(\bm{\tau},\bm{\alpha})\leq u\right) \\
&=\mathbb{P}\left(\bigcap_{i\in \mathcal{I}}p_i(\bm{\tau},\alpha_i)\leq u\right)\leq \max_{i\in \mathcal{I}}\mathbb{P}\left(p(\bm{\tau},\alpha_i)\leq u\right).
\end{split}
\end{equation}
Since for each $i \in \mathcal{I}$, $\mathbb{P}\left(p_i(\bm{\tau},\alpha_i) \leq u\right) \leq u$,  we have $\max_{i\in \mathcal{I}}\mathbb{P}\left(p(\bm{\tau},\alpha_i)\leq u\right)\leq u$, which implies $\mathbb{P}\left(p(\bm{\tau},\bm{\alpha}) \leq u\right)\leq u$.
\end{proof}

\subsection{Proof of Proposition~\ref{prop:validity}}

This is direct result of Split FST~\cite{angelopoulos2021learn}, which we prove here for completeness.

\begin{proof}
Since $\mathcal{D}_{\mathrm{opt}}$ and $\mathcal{D}_{\mathrm{testing}}$ are disjoint, i.i.d., $\mathcal{D}_{\mathrm{testing}}$ is also i.i.d. w.r.t the returned Pareto frontier over $\mathcal{D}_{\mathrm{opt}}$. Lemma~\ref{lemma:pvalue} then gives that $p^\mathrm{testing}(\bm{\tau}, \bm{\alpha})$ are super-uniform under $H_{\bm{\tau}}$.

We now prove simultaneous $(\alpha, \delta)$-control over $\mathcal{T}_r$.
Let $H_{\bm{\tau}'}$ be the first true null hypothesis in the sequence. Given that $p(\bm{\tau}',\bm{\alpha})$ is a super uniform p-value under $H_{\bm{\tau}'}$, the probability of making a false discovery at $\bm{\tau}'$ is bounded by $\delta$. However, if $H_{\bm{\tau}'}$ fails to be rejected (no false discovery), then all other $H_{\bm{\tau}}$ that follow in the sequence also fail to be rejected (regardless of if $H_{\bm{\tau}}$ is true or not). So the probability of making any false discoveries is also bounded by $\delta$. 

This implies that the probability that all configurations in $\mathcal{T}^*\subseteq\mathcal{T}_r$ are risk controlling is at least $1 - \delta$, which also implies that any configuration in $\mathcal{T}^*$ is $(\alpha, \delta)$-risk controlling.   
\end{proof}

\subsection{Proof of Proposition~\ref{corr:convex_hull}}
\begin{proof}
We restate our randomized time-sharing strategy: for each test point $(X, Y)$ we independently sample the configuration $\bm{\tau}_j^* \in \mathcal{T}^*$ to use with probability $\Delta_j$, where $\Delta \in \mathbb{R}^{|\mathcal{T}^*|}$ is a point in the $|\mathcal{T}^*|-1$ dimensional probability simplex.

Given $\mathcal{D}_\mathrm{cal}$, $\mathcal{T}^*$ is a constant set, and each risk $Q_i(\bm{\tau}_j^*) = \mathbb{E}[q_i(X,Y;\bm{\tau}_j^*)]$ is also a constant for all $\bm{\tau}_j^* \in \mathcal{T}^*$ and $i \in \{1, \ldots, c\}$. For each $Q_i$, the combined risk of the time-sharing strategy can then be derived as the mean of a mixture model where\looseness=-1
\begin{equation}
\begin{split}
Q_i(\bm{\tau}^\textrm{share})&=\sum_{j=1}^{|\mathcal{T}^*|} \Delta_j Q_i(\bm{\tau}_j^*) \\
&\leq \sum_{j=1}^{|\mathcal{T}^*|} \Delta_j \max_{j'} Q_i(\bm{\tau}_{j'}^*) = \max_{j'} Q_i(\bm{\tau}_{j'}^*).
\end{split}
\end{equation} 

Let $E$ be the event that all $\bm{\tau}_j^* \in \mathcal{T}^*$ are risk controlling across all $Q_i$ at level $\alpha_i$ given the draw of $\mathcal{D}_\mathrm{cal}$. Proposition~\ref{prop:validity} gives that this event occurs with probability at least $1 - \delta$. Therefore,
\begin{equation}
\mathbb{P}\left(Q_i(\bm{\tau}^\textrm{share}) \leq \alpha_i\right) \geq \mathbb{P}(\max_{j'} Q_i(\bm{\tau}_{j'}^*) \leq \alpha_i)  \geq 1 - \delta, ~~\forall i \in \{1, \ldots, c\} \text{ simultaneously.}
\end{equation} 
Thus we have that $\bm{\tau}^\mathrm{share}$ is also  $(\alpha, \delta)$-risk controlling for any choice of $\Delta$.
\end{proof}

\subsection{Equivalence of Eqs.~\eqref{eq:class_conditioned} and ~\eqref{eq:acc_class}}
For $P(Y=y)>0$, the conditional probability $f_{X|Y}(X|Y=y)$ is equal to $\frac{f_{XY}(X,Y=y)}{P(Y=y)}=\frac{f_{XY}(X,Y=y)}{\mathbb{E}\left[\mathbf{1}\{Y=y\}\right]}$, and $0$ elsewhere. Eq.~\eqref{eq:class_conditioned} can then be written as:\looseness=-1
\begin{equation}
\mathbb{E}\left[D'(X,Y; \tau)|Y=y\right] =  \frac{\mathbb{E}\left[D'(X,Y; \tau)\cdot\mathbf{1}\{Y=y\}\right]} {\mathbb{E}\left[\mathbf{1}\{Y=y\}\right]} \leq \alpha. 
\end{equation}
Multiplying both sides by $\mathbb{E}\left[\mathbf{1}\{Y=y\}\right]$, we get:
\begin{equation}
\mathbb{E}\left[D'(X,Y; \tau)\cdot\mathbf{1}\{Y=y\}\right] \leq \alpha\cdot{\mathbb{E}\left[\mathbf{1}\{Y=y\}\right]}
=\alpha\cdot \left(1-\mathbb{E}\left[\mathbf{1}\{Y\neq y\}\right]\right) 
\end{equation}
Then, Eq.~\ref{eq:acc_class} immediately follows:
\begin{equation}
\mathbb{E}\left[D'(X,Y; \tau)\cdot \mathbf{1}\{Y=y\} + \alpha \cdot \mathbf{1}\{Y\neq y\} \right] \leq \alpha , \>\> \forall y \in \mathcal{Y}    
\end{equation}
Note that the per-class risks are of the same type and bounded by the same $\alpha$. Since the p-value is monotonic with respect to the empirical risk, we have:
\begin{equation}
\begin{split}
    p(\bm{\tau},\alpha) = \max_{y\in \mathcal{Y}}p\left(\hat{Q}_\textrm{acc-class}(y,\bm{\tau});\alpha, m\right) = p\left(\max_{y\in \mathcal{Y}}\hat{Q}_\textrm{acc-class}(y, \bm{\tau});\alpha, m\right).
    \end{split}
\end{equation}
Therefore, instead of $|\mathcal{Y}|$ objective functions, one per each class, we can define a single equivalent empirical objective $\hat{Q}_\textrm{acc-worst}(\bm{\tau}) = \max_{y\in \mathcal{Y}}\hat{Q}_\textrm{acc-class}(y, \bm{\tau})$.

\subsection{Selective Classification}
Similarly to above derivation for per-class accuracy, for selective classification we define the selective cost (given selection):
\begin{equation}
\begin{split}
&Q_\textrm{select-cost}(\bm{\tau},\lambda) = \\ &\hspace{1.5cm}\mathbb{E}\left[q_\textrm{cost}(X; \bm{\tau})\cdot \mathbf{1}\left\{\max_y f(X,y;\bm{\tau})\geq\lambda\right\} + \alpha \cdot \mathbf{1}\left\{\max_y f(X,y;\bm{\tau})<\lambda\right\} \right]
\label{eq:select_cost}
\end{split}
\end{equation}
and the same for selective accuracy reduction:
\begin{equation}
\begin{split}
&Q_\textrm{select-acc}(\bm{\tau},\lambda)= \\
&\hspace{1.5cm}\mathbb{E}\left[q_\textrm{acc}(X; \bm{\tau})\cdot \mathbf{1}\left\{\max_y f(X,y;\bm{\tau})\geq\lambda\right\} + \alpha \cdot \mathbf{1}\left\{\max_y f(X,y;\bm{\tau})<\lambda\right\} \right].
\label{eq:select_acc}
\end{split}
\end{equation}

\section{Multi-dimensional pruning}
\label{sec:pruning_details}
First, we describe the core model units and introduce some essential notation, while keeping the exact model implementation as general as possible. Second, we describe each of the pruning dimensions, its associated importance score, and the thresholding mechanism.

\subsection{Transformer model}
Consider a Trasformer model~\citep{vaswani2017attention,devlin2018bert} with $K$ layers. The input to the model is given as a sequence of $L$ (for notational simplicity we omit here the dependency of the length on $x$) tokens $x=\left(x_1,\ldots,x_{L}\right)$, which are first mapped to learneable word embeddings $e=\left(e_1,\ldots,e_L\right)$. Tokens are then passed through the model's layers, with $h_j=\left(h_{j,1},\ldots,h_{j,L}\right)$ denoting the $k$-th layer hidden representation. Each layer consists of a multi-head attention, with $W$ heads producing the combined output $a_j = \sum_{w=1}^W\textrm{Attn}_{j,w}(h_{j-1})$, which is followed by a feed-forward network to provide the next layer hidden representation. The last layer is attached to a classification head with $|\mathcal{Y}|$ outputs, where $f(x,y)$ denotes the output of class $y$. The model is optimized by minimizing a loss function $\mathcal{L}$ computed empirically over the training set. 

\subsection{Token pruning}
It is of often the case that the input consists of a large amount of tokens that have a negligible contribution for the prediction task. The idea in token pruning is to identify unimportant tokens and discard them at some point in the model.   

To identify the contribution of each token, we attach to each layer a token importance predictor $s^\textrm{tok}_j: \mathcal{X}\rightarrow \mathbb{R}$ based on the token hidden representation $h_{j,l}$. Following~\citep{modarressi2022adapler}, we use as importance scores gradient attributions, computed by:
\begin{equation}
    r_l= \left\|\frac{\partial f(x,y_c)}{\partial e_l}\odot e_l\right\|_2
\end{equation}
where $y_c$ is the true label, and $\odot$ denotes element-wise product. The token importance predictors in each layer are optimized with a cross-entropy loss, where the labels are the scores, normalised to sum to one. In the $k$-th layer, tokens with $s^\textrm{tok}_j(x_l)<\tau^\textrm{tok}$ are pruned and are not transferred to next layer. The number of tokens remaining after pruning is given by $L_j(x;\tau^\textrm{tok})=\sum_{l=1}^L\bigcap_{j'=1}^j\mathbf{1}\left\{s^\textrm{tok}_{j'}(x_l)>\tau^\textrm{tok}\right\}$.

\subsection{Early exiting}
Early-exiting is based on the idea that examples vary in their difficulty level, hence, require different amount of computation to reach to a good prediction. While for simple examples a decision can be made early on, difficult examples may require going through the full model.  
We attach a prediction head $f_j: \mathcal{X}\rightarrow \mathcal{Y}$ to each layer, trained to predict the labels via the same loss function as the original model. Following~\cite{liu2020fastbert}, we define the importance score based on the prediction head entropy:
\begin{equation}
    s^\textrm{layer}_j(x)=\sum_{y \in |\mathcal{Y}|} p_j(y|x)\log p_j(y|x)  
\end{equation}
where $p_j(y|x)$ are the per-class probabilities provided by the $k$-th prediction head. Based on this score, examples with $s^\textrm{layer}_j(x) < \tau^\textrm{layer}$ exit in the $k$-th layer. The exit layer of $x$ is given by $K_\textrm{exit}(x;\tau^\textrm{layer})=\argmin_j \left\{j\in\{1,\ldots,K\} \Big| s^\textrm{layer}_j(x) < \tau^\textrm{layer}\right\}$. 

\subsection{Head pruning}
It was shown~\citep{michel2019sixteen}, that significant fraction of attention heads can be removed with a little impact on the performance.  
Each attention head $(w,j), \> 1\leq j \leq K, 1\leq w \leq W$ is assigned a score $s^\textrm{head}_j(w)$ according to:
\begin{equation}
    s^\textrm{head}_j(w) = \left |\textrm{Attn}_{j,w}(h_{j-1})^T\frac{\partial \mathcal{L}}{\partial\textrm{Attn}_{j,w}(h_{j-1})}\right  |.
\end{equation}
The scores in each layer are normalized to sum to one.
Attention heads with $s^\textrm{head}_j(w)<\tau^\textrm{head}$ are pruned. The number of heads left after pruning is given by $W_j(\tau^\textrm{head})=\sum_{w=1}^W\mathbf{1}\left\{s^\textrm{head}_j(w)>\tau^\textrm{head}\right\}$. Note that this is a fixed pruning, unlike the previous pruning dimensions that vary according to the input $x$. 
\section{Implementation and dataset details}
\label{sec:exp_details}
\textbf{Datasets.} Splitting specifications and full model performance on each task are contained in Table~\ref{dataset}. Note that for IMDB, QQP and MNLI we used a subset of the original dev/test set in order to expedite evaluation. For MNLI we used the split of~\cite{sagawa2019distributionally}.

\textbf{Prediction Heads.}
Each prediction head is a $2$-layer feed-forward neural network with $32$ dimensional hidden states, and ReLU activation. The input is the hidden representation of the \texttt{[CLS]} token concatenated with the hidden representation of all previous layers, as was proposed in~\citep{wolczyk2021zero}.

\textbf{Token importance predictors.}
Each token importance predictor is a $2$-layer feed-forward neural network with $32$ dimensional hidden states, and ReLU activation. The input is the hidden representation of each token in the current layer and all previous layers, following~\citep{wolczyk2021zero}.

\textbf{Training.} The core model is first finetuned on each task. We compute the attention head importance scores based on validation data. We freeze the backbone model and train the early-exit classifiers and the token importance predictors on the training data.

\textbf{Code.} Our code will be made
available at \url{https://github.com/bracha-laufer/pareto-testing}.

\begin{table}[t]
\caption{Datasets Details}
\label{dataset}
\begin{center}
\resizebox{0.99 \linewidth}{!}{%
\begin{tabular}{c c c c c c c c c}
\toprule
\toprule
\multicolumn{1}{c}{Dataset}  &\multicolumn{1}{c}{ $|\mathcal{Y}|$}  &\multicolumn{1}{c}{Task} & \multicolumn{1}{c}{Train} & \multicolumn{1}{c}{Val.} & \multicolumn{1}{c}{Test} & \multicolumn{1}{c}{Cal. (out of Test)} & \multicolumn{1}{c}{Full model Acc.}
\\ \hline \\
IMDB         & $2$ &  Sentiment analysis on movie reviews & $20$K & $5$K & $10$K & $5$K & 94 &\\
AG News      & $4$ &  News topic classification & $115$K & $5$K & $7.6$K & $5$K & 93 &\\
QNLI         & $2$ &  Question-answer pair classification & $\sim10$K & $5$K & $\sim 5.5$K & $3.4$K & 92 &\\
QQP          & $2$ & Question pair semantic
equivalence &$\sim360$K & $5$K & $10$K & $5$K & 91 &\\
MNLI         & $3$ & Natural language inference & $\sim250$K & $\sim150$K & $30$K & $10$K& $86$ &\\
\bottomrule
\bottomrule
\end{tabular}}
\end{center}
\end{table}
\section{Additional baselines and results}
\label{sec:more_res}

\begin{figure}[t]
\centering
\begin{subfigure}[b]{0.35\textwidth}
\includegraphics[width=\textwidth]{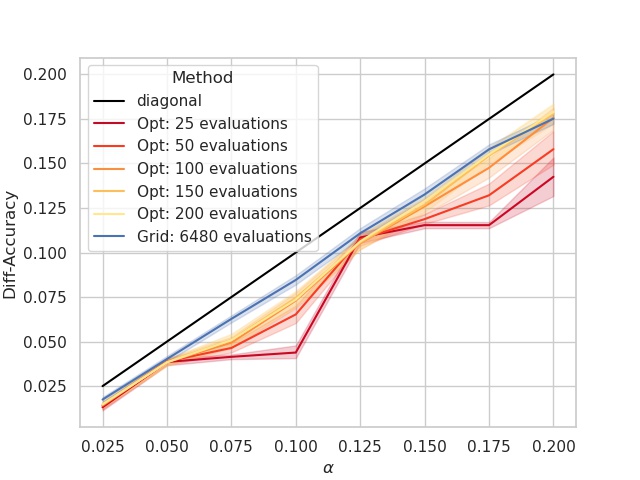}
\end{subfigure}
\begin{subfigure}[b]{0.35\textwidth}
\vspace*{-2.5ex}
 \includegraphics[width=\textwidth]{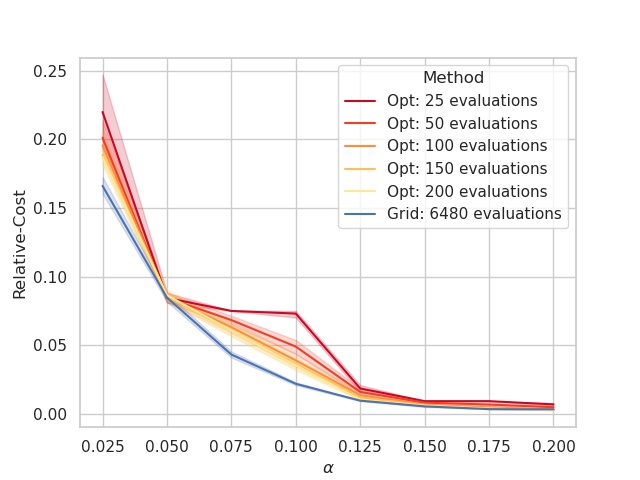}
\end{subfigure}
\caption{Results of Pareto Testing over AG News with multi-objective optimizer for different number of evaluations, and with a grid of thresholds. Results are averaged over $50$ random splits. Accuracy reduction is controlled and cost is minimized. \looseness=-1}
\label{fig:res_preto_opt}
\end{figure}

\begin{figure}[t]
\centering
\includegraphics[width=0.35\textwidth]{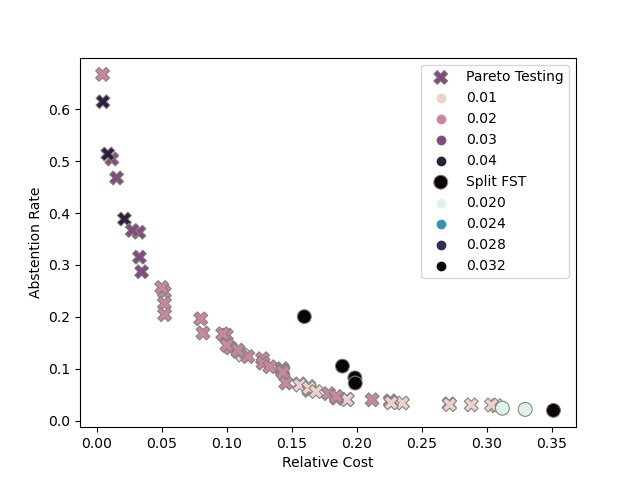}
\caption{Three-objectives, AG News - one controlled, two free: accuracy reduction is controlled by $\alpha=0.05, \delta=0.1$, cost and abstention rate are minimized. coloring is according to accuracy reduction.\looseness=-1}  
\label{fig:res_three_obj_cost_coverage}
\end{figure}

Our method is based on the \ac{LTT} framework~\cite{angelopoulos2021learn}, which is summarized in Algorithm~\ref{alg:ltt}. We compare our method to two baselines from~\cite{angelopoulos2021learn}: 3D-SGT summarised in Algorithm~\ref{alg:graph3d_test}, which is a 3D extension to the 2D Hamming \ac{SGT}, and Split-\ac{FST} described in Appendix~\ref{sec:fwer}. Note that we consider a broader setting in which, besides multiple risk control we wish to optimize additional free objective functions.

Moreover, we develop two additional baselines and present their results herein:

\textbf{Low-Risk Path} - similar to Split-\ac{FST} (and Pareto Testing), this is a dual-stage method, assuming that the calibration data is split into two subsets. In the first stage, we find a solution to the constrained optimization problem, defined in Eq.~\eqref{eq:constrained_opt}. Then a low risk path is defined from full model to the solution. The path is defined over the grid of hyper-parameter combinations, where in each step we pick a neighbouring hyper-parameter combination (increasing one hyper-parameter dimension with respect to previous step) with  lowest risk among all neighbours. The method is summarized in Algorithm~\ref{alg:shortest_test}. Note that as the method defines the path in the hyper-parameter space, it implicitly assumes that the objective functions are monotonic with respect to each of the hyper-parameters.  

\textbf{Constrained-Path Testing} - This can be considered a variant of the proposed method. When interested in a single configuration selection for specific $\bm{\alpha}$ and $\delta$, a cheaper (but not equivalent) approach would be to solve \emph{multiple} constrained problems:
\begin{align}
    \argmin_{(\tau_1,\ldots, \tau_n) \in \mathcal{T}} \quad & \left\{\hat{Q}_{c+1}(\tau_1,\ldots,\tau_n),\ldots,\hat{Q}_{c+k}(\tau_1,\ldots,\tau_n) \right\}\nonumber \\
    \textrm{s.t.} \quad & \hat{Q}_i(\tau_1,\ldots,\tau_n) < \alpha_i-\epsilon ,\>\> \forall 1\leq i \leq c
    \label{eq:constrained_opt_eps}
\end{align}
for a sequence of $\epsilon$ values in $[0,\ldots,\min_i\alpha_i]$. Then, an ordered set of configurations to test is defined by the solutions to Eq.~\eqref{eq:constrained_opt_eps} with \emph{decreasing} values of $\epsilon$. Note that both the constrained and the full multi-objective variants are equivalent for the case of a single control constraint. However, when there are multiple constraints Pareto Testing operates on a larger set of hyper-parameter combinations, consisting of solutions to:
\begin{align}
    \argmin_{(\tau_1,\ldots, \tau_n) \in \mathcal{T}} \quad & \left\{\hat{Q}_{c+1}(\tau_1,\ldots,\tau_n),\ldots,\hat{Q}_{c+k}(\tau_1,\ldots,\tau_n) \right\}\nonumber \\
    \textrm{s.t.} \quad & \hat{Q}_i(\tau_1,\ldots,\tau_n) < \alpha_i-\epsilon_i ,\>\> \forall 1\leq i \leq c
    \label{eq:constrained_opt_eps_all}
\end{align}
with $\epsilon_i$ values in $[0,\ldots,\alpha_i]$, namely, solving the constrained problem for all possible combinations of  $(\epsilon_1,\ldots,\epsilon_c)$. We show in the results bellow that due to this difference, Constrained Path-Testing is inferior when constraints operate on risks with different decay patterns (such as abstention rate and accuracy). 

\textbf{Pruning model.} Figure~\ref{fig:config_curve} shows the accuracy and the relative cost of the proposed adaptive pruning model $f$ for various threshold combinations, computed over test data. We see that the fusion of all pruning dimensions yields a wide variety of configurations with a clear trade-offs between accuracy and cost. In addition, the Pareto front consists of different threshold combinations, indicating that the optimal threshold value in each dimension is not fixed and varies with the desired cost/accuracy level. 

\begin{figure}[t]
\begin{center}
\begin{subfigure}[b]{0.52\textwidth}
 \centering
 \includegraphics[width=\textwidth]{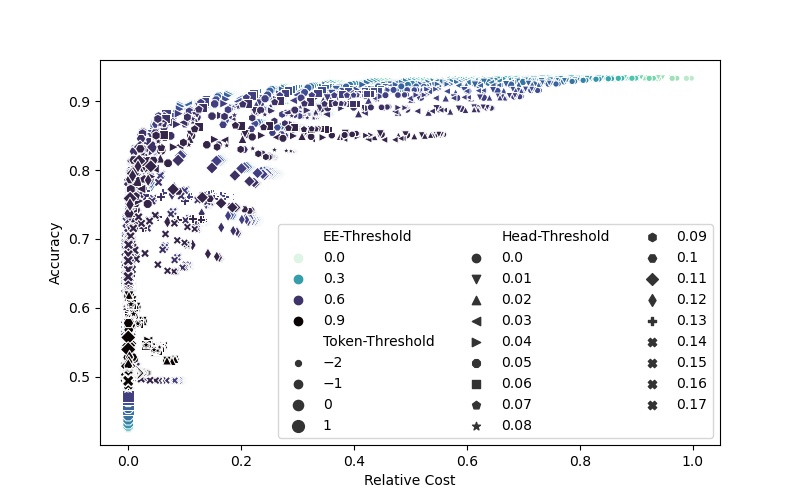}
 \caption{IMDB}
\end{subfigure}
\hspace{-2.3em}
\begin{subfigure}[b]{0.52\textwidth}
 \centering
 \includegraphics[width=\textwidth]{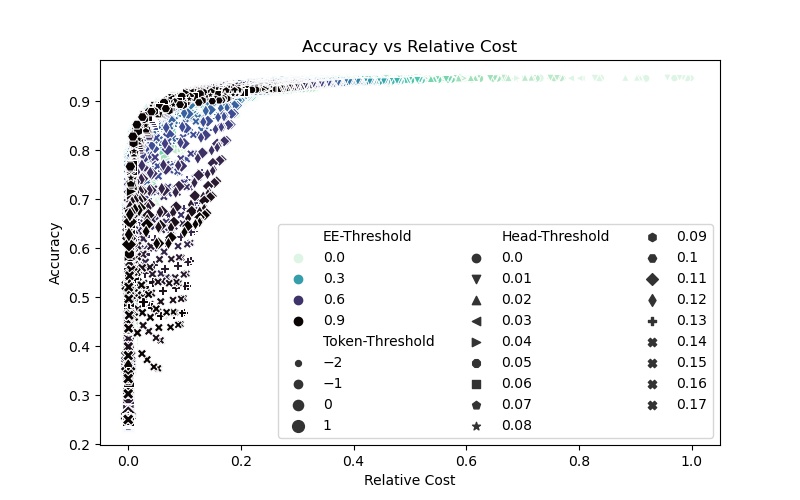}
 \caption{AG News}
\end{subfigure}
\begin{subfigure}[b]{0.52\textwidth}
 \centering
 \includegraphics[width=\textwidth]{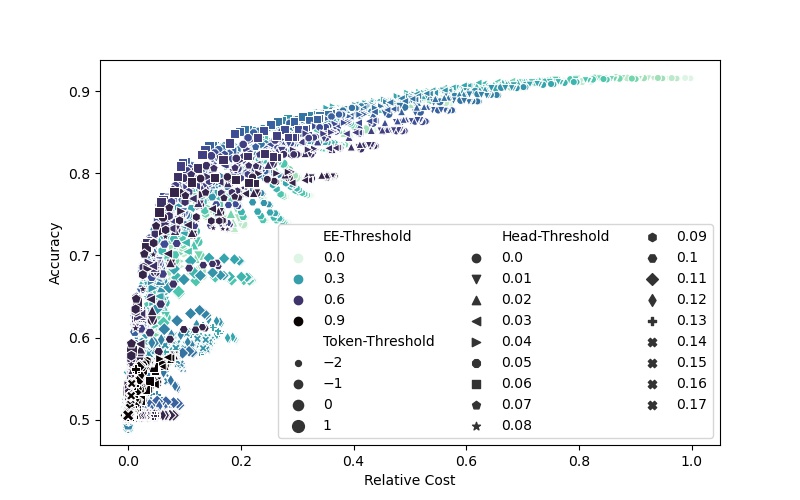}
 \caption{QNLI}
\end{subfigure}
\hspace{-2.3em}
\begin{subfigure}[b]{0.52\textwidth}
 \centering
 \includegraphics[width=\textwidth]{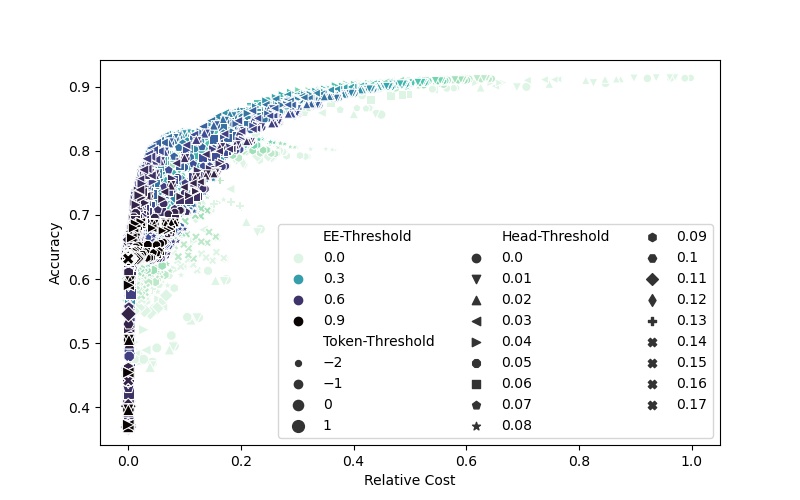}
 \caption{QQP}
\end{subfigure}
\end{center}
\caption{Shows cost and accuracy trade-offs over test data provided by a grid of $6480$ configurations ($18$ head, $20$ token and $18$ early-exit thresholds).}
\label{fig:config_curve}
\end{figure}

\textbf{Two objectives - accuracy controlled, cost minimized (additional results).} Results for additional baselines are shown in Fig.~\ref{fig:res_two_obj_more}, including the two non-controlling baselines: $\alpha$-constrained and ($\alpha,\delta$)-constrained, and our derived Low-Risk Path baseline. We see that, in many cases, Low-Risk Path obtains similar cost reduction compared to our proposed method, however, it is inferior for certain tasks and $\alpha$ values. As expected, both non-risk controlling baselines obtain lower costs compared to our method. This of course comes with a price of suffering from risk violations that exceed $\delta=0.1$, as can be seen in the bottom line bar plots. However, we see that the there is not a large difference between the cost reductions obtained by our method and the non-controlling baselines. This indicates that our method, though providing risk control guarantees, as opposed to the non-controlling baselines, is not overly conservative, and effectively optimizes the free objective function, leading to significant cost reduction.


\begin{figure}[t]
\begin{center}
\vspace{-0.5cm} 
\begin{subfigure}[b]{0.265\textwidth}
 \centering
 \includegraphics[width=\textwidth]{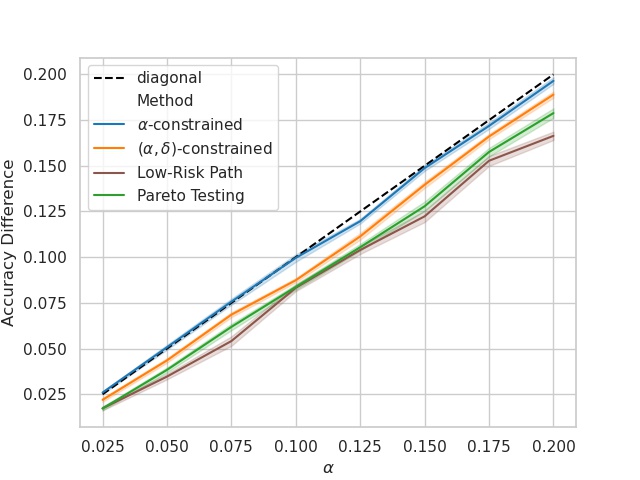}
 \vspace{-0.45cm} 
\end{subfigure}
\hspace{-1.5em}
\begin{subfigure}[b]{0.265\textwidth}
 \centering
 \includegraphics[width=\textwidth]{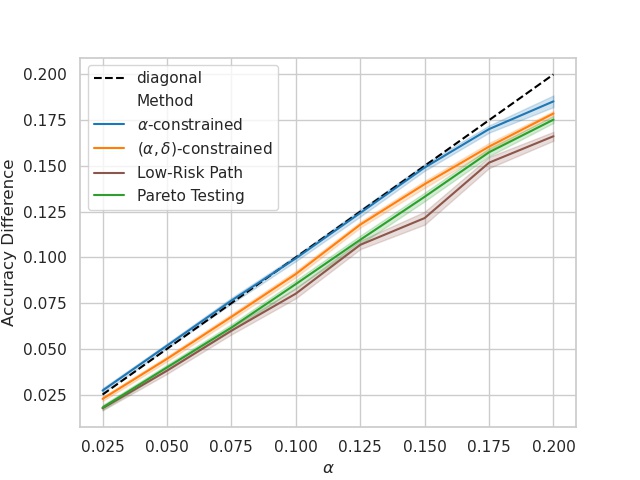}
\vspace{-0.45cm} 
\end{subfigure}
\hspace{-1.5em}
\begin{subfigure}[b]{0.265\textwidth}
 \centering
 \includegraphics[width=\textwidth]{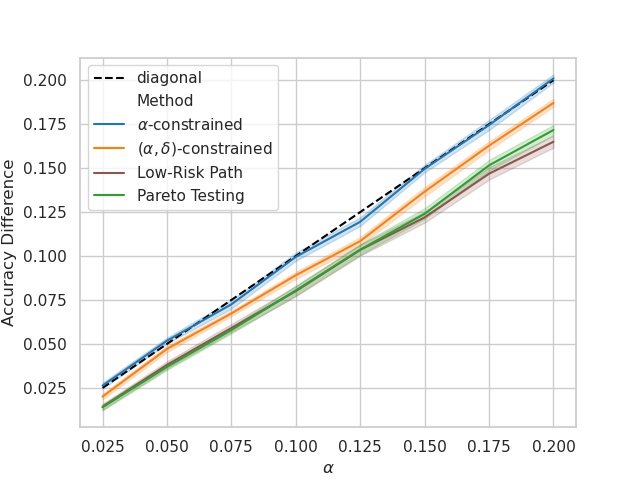}
\vspace{-0.45cm} 
\end{subfigure}
\hspace{-1.5em}
\begin{subfigure}[b]{0.27\textwidth}
 \centering
 \includegraphics[width=\textwidth]{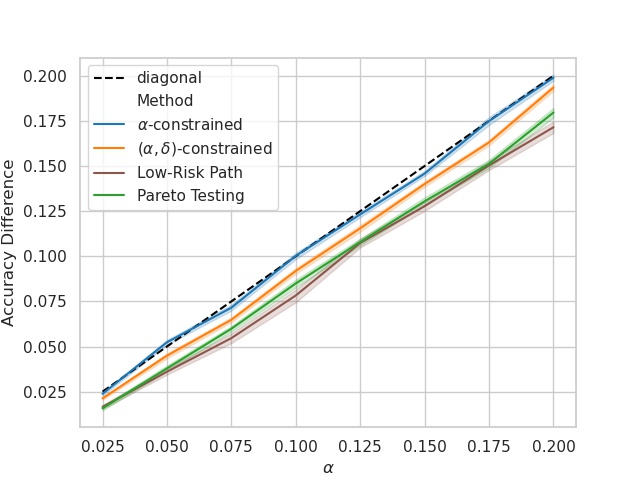}
\vspace{-0.45cm} 
\end{subfigure}
\begin{subfigure}[b]{0.265\textwidth}
 \centering
 \includegraphics[width=\textwidth]{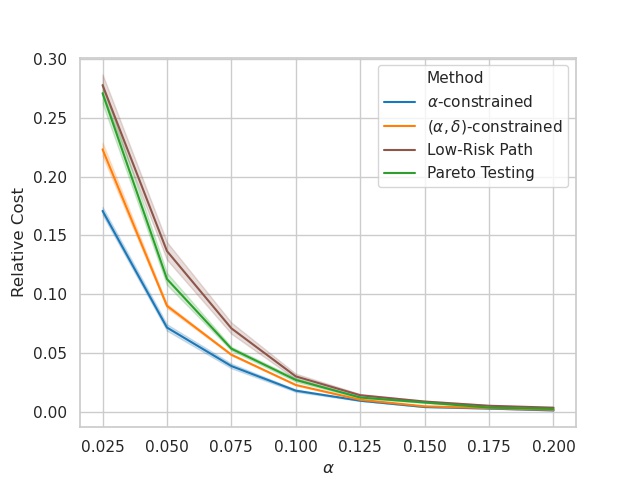}
 \vspace{-0.45cm} 
\end{subfigure}
\hspace{-1.5em}
\begin{subfigure}[b]{0.265\textwidth}
 \centering
 \includegraphics[width=\textwidth]{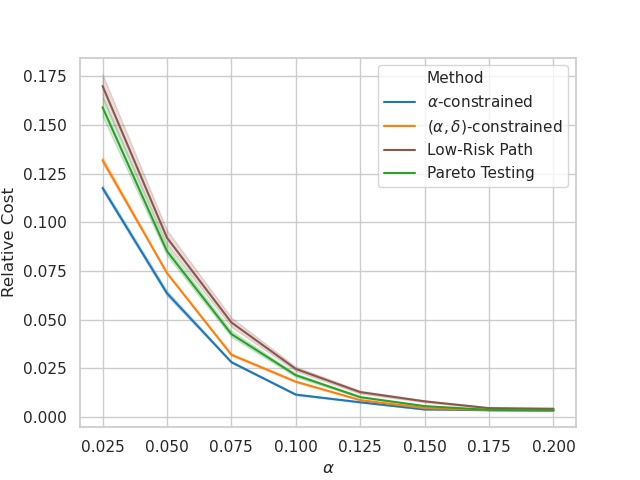}
 \vspace{-0.45cm} 
\end{subfigure}
\hspace{-1.5em}
\begin{subfigure}[b]{0.265\textwidth}
 \centering
 \includegraphics[width=\textwidth]{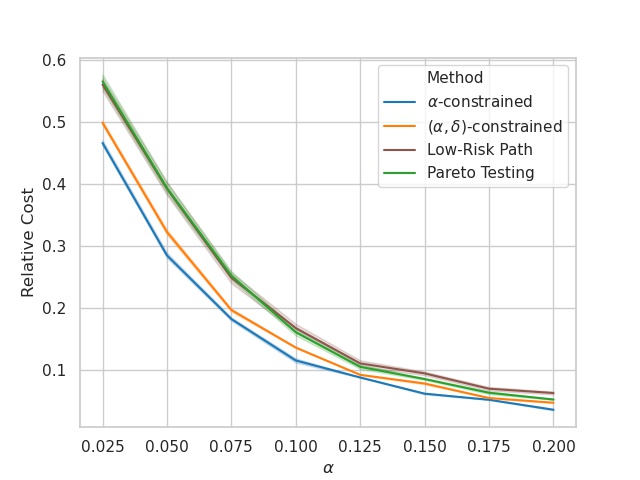}
 \vspace{-0.45cm} 
\end{subfigure}
\hspace{-1.5em}
\begin{subfigure}[b]{0.265\textwidth}
 \centering
 \includegraphics[width=\textwidth]{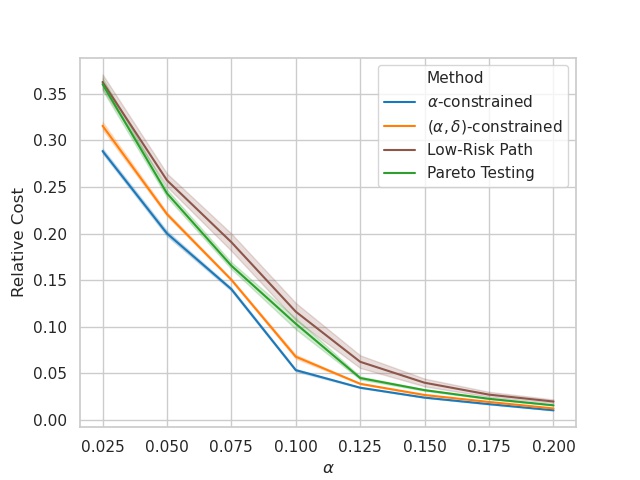}
 \vspace{-0.45cm} 
\end{subfigure}
\begin{subfigure}[b]{0.265\textwidth}
 \centering
 \includegraphics[width=\textwidth]{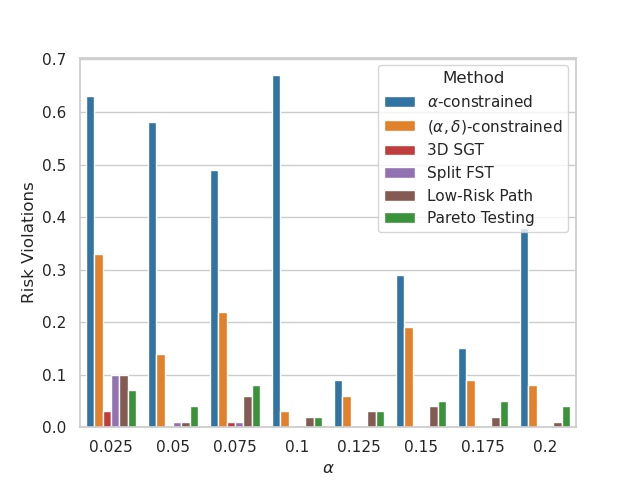}
 \caption{IMDB}
 \vspace{-0.45cm} 
\end{subfigure}
\hspace{-1.5em}
\begin{subfigure}[b]{0.265\textwidth}
 \centering
 \includegraphics[width=\textwidth]{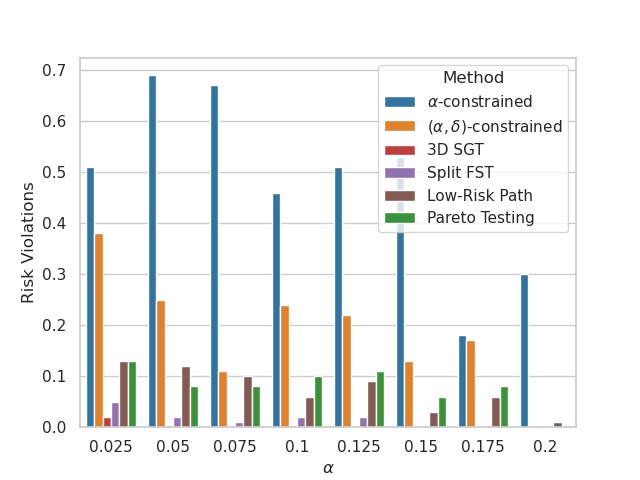}
 \caption{AG News}
 \vspace{-0.45cm} 
\end{subfigure}
\hspace{-1.5em}
\begin{subfigure}[b]{0.265\textwidth}
 \centering
 \includegraphics[width=\textwidth]{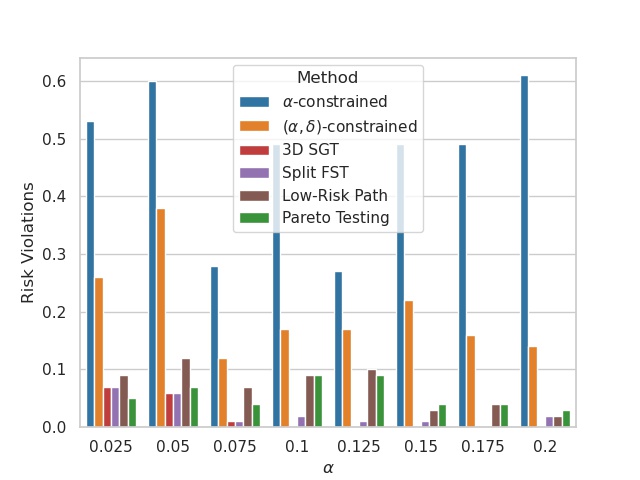}
 \caption{QNLI}
 \vspace{-0.45cm} 
\end{subfigure}
\hspace{-1.5em}
\begin{subfigure}[b]{0.265\textwidth}
 \centering
 \includegraphics[width=\textwidth]{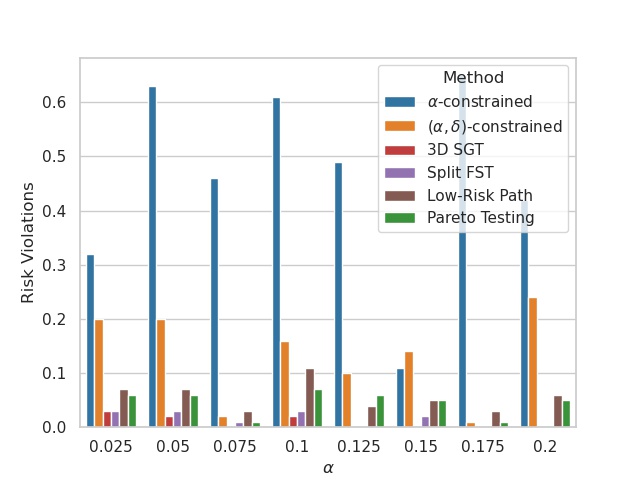}
 \caption{QQP}
 \vspace{-0.45cm} 
\end{subfigure}
\end{center}
\caption{Additional results for two-objectives - accuracy reduction is controlled by $\alpha\in\{0.025, 0.5,\ldots, 0.2\}$, $\delta=0.1$, cost is minimized. Top row: accuracy reduction; middle row: relative cost; last row: rate of risk-violations. Results are averaged over $100$ random splits to calibration and test.\looseness=-1}  
\label{fig:res_two_obj_more}
\end{figure}

\textbf{Two objectives - cost controlled, relative accuracy loss minimized}. We evaluated the opposite scenario where the relative cost is controlled, while accuracy reduction is minimized. Note that for 3D-SGT and Low-Risk Path we start testing from the empty model (lowest cost risk) towards the full model. The accuracy reductions of all methods are summarized in Table~\ref{tab:control_cost}. We observe that as opposed to accuracy reduction, here the results of all methods are similar (except for Low-Risk Path).

\begin{table}[t]
\caption{Two-objective scenario: relative cost is controlled by $\alpha\in\{0.1, 0.2, 0.3\}$, $\delta=0.1$, accuracy reduction is minimized. Shows accuracy reduction.  Results are averaged over $100$ trials with random splits to calibration and test.\looseness=-1}
\label{tab:control_cost}
\begin{center}
\resizebox{0.99 \linewidth}{!}{%
\begin{tabular}{c c c c c c c c c c c c c c c}
\toprule
\toprule
Method & \multicolumn{4}{c}{$\alpha=0.1$} && \multicolumn{4}{c}{$\alpha=0.2$}  && \multicolumn{4}{c}{$\alpha=0.3$} \\
\cmidrule{2-5} \cmidrule{7-10} \cmidrule{12-15}
& \multicolumn{1}{c}{Ag News} & \multicolumn{1}{c}{IMDB} & \multicolumn{1}{c}{QNLI}& \multicolumn{1}{c}{QQP} &&
\multicolumn{1}{c}{Ag News} & \multicolumn{1}{c}{IMDB} & \multicolumn{1}{c}{QNLI}& \multicolumn{1}{c}{QQP}  &&
\multicolumn{1}{c}{Ag News} & \multicolumn{1}{c}{IMDB} & \multicolumn{1}{c}{QNLI}& \multicolumn{1}{c}{QQP}  \\
\cmidrule{1-1} \cmidrule{2-5} \cmidrule{7-10} \cmidrule{12-15}

\multirow{1}{*}{3D-SGT}
&  $0.05$ &  $0.04$  &  $0.14$  &  $0.09$ 
&&  $0.03$ &  $0.02$  &  $0.09$  &  $0.06$ 
&&  $0.02$ &  $0.00$  &  $0.05$ &   $0.03$  \\

\cmidrule{1-1} \cmidrule{2-5} \cmidrule{7-10} \cmidrule{12-15}

\multirow{1}{*}{Split FST}
&  $0.05$ &  $0.04$  &  $0.15$  &  $0.10$ 
&&  $0.02$ &  $0.02$  &  $0.09$  &  $0.06$ 
&&  $0.00$ &  $0.00$  &  $0.06$ &   $0.03$  \\

\cmidrule{1-1} \cmidrule{2-5} \cmidrule{7-10} \cmidrule{12-15}

\multirow{1}{*}{Low-Risk Path}
&  $0.26$ &  $0.10$  &  $0.22$  &  $0.25$ 
&&  $0.17$ &  $0.05$  &  $0.26$  &  $0.11$ 
&&  $0.02$ &  $0.01$  &  $0.12$ &   $0.05$  \\

\cmidrule{1-1} \cmidrule{2-5} \cmidrule{7-10} \cmidrule{12-15}
\multirow{1}{*}{Pareto Testing}
&  $0.05$ &  $0.04$  &  $0.13$  &  $0.09$ 
&&  $0.02$ &  $0.02$  &  $0.08$  &  $0.06$ 
&&  $0.02$ &  $0.00$  &  $0.05$ &   $0.03$  \\
                    
\bottomrule
\bottomrule
\end{tabular}}
\end{center}
\end{table}

\textbf{Two objectives - accuracy controlled, FLOPs minimized.} We experimented with a different cost measure in terms of FLOPs speed-up, which can be considered as a more practical measure, tailored to the specific architecture being used. Results are summarized in Table~\ref{tab:flops}. We see that the proposed method almost always obtains the best speed-ups, which is inline with our other results.

\begin{table*}[t]
\centering
\caption{FLOPs Speed-Up. Accuracy Reduction is controlled by level $\alpha$, while Flops Speed-Up is maximized. Results are averaged over $10$ random splits to calibration and test.}
\label{tab:flops}
\resizebox{0.99 \linewidth}{!}{%
\begin{tabular}{c c c c c c c c c c c c c c c}
\toprule
\toprule
Method & \multicolumn{4}{c}{$\alpha=0.025$} && \multicolumn{4}{c}{$\alpha=0.05$}  && \multicolumn{4}{c}{$\alpha=0.1$} \\
\cmidrule{2-5} \cmidrule{7-10} \cmidrule{12-15}
& \multicolumn{1}{c}{Ag News} & \multicolumn{1}{c}{IMDB} & \multicolumn{1}{c}{QNLI}& \multicolumn{1}{c}{QQP} &&
\multicolumn{1}{c}{Ag News} & \multicolumn{1}{c}{IMDB} & \multicolumn{1}{c}{QNLI}& \multicolumn{1}{c}{QQP}  &&
\multicolumn{1}{c}{Ag News} & \multicolumn{1}{c}{IMDB} & \multicolumn{1}{c}{QNLI}& \multicolumn{1}{c}{QQP}  \\
\cmidrule{1-1} \cmidrule{2-5} \cmidrule{7-10} \cmidrule{12-15}

\multirow{1}{*}{3D-SGT}
&  $\times 2.34$ &  $\times 1.71$  &  $\times \mathbf{1.38}$  &  $\mathbf{\times 1.89}$ 
&&  $\times 2.89$ &  $\times 4.45$  &  $\times \mathbf{1.59}$  &  $\times 2.16$ 
&&  $\times 3.12$ &  $\times 4.45$  &  $\times 1.99$ &   $\times 2.71$  \\

\cmidrule{1-1} \cmidrule{2-5} \cmidrule{7-10} \cmidrule{12-15}

\multirow{1}{*}{Split FST}
&  $\times 2.11$ &  $\times 1.69$  &  $\times 1.35$  &  $\times 1.66$ 
&&  $\times 2.38$ &  $\times 1.99$  &  $\times 1.50$  &  $\times 2.15$ 
&&  $\times 2.80$ &  $\times 4.18$  &  $\times 1.89$ &   $\times 2.69$  \\

\cmidrule{1-1} \cmidrule{2-5} \cmidrule{7-10} \cmidrule{12-15}

\multirow{1}{*}{Low-Risk Path}
&  $\times 2.38$ &  $\times 3.86$  &  $\times 1.37$  &  $\times 1.85$ 
&&  $\times 2.95$ &  $\times 4.2$  &  $\times 1.56$  &  $\times 2.16$ 
&&  $\times 3.65$ &  $\times 4.20$  &  $\times 1.94$ &   $\times 2.73$  \\
                    
\cmidrule{1-1} \cmidrule{2-5} \cmidrule{7-10} \cmidrule{12-15}
\multirow{1}{*}{Pareto Testing}
&  $\mathbf{\times 2.42}$ &  $\mathbf{\times 4.82}$  &  $\times 1.37$  &  $\times 1.86$ 
&&  $\mathbf{\times 3.00}$ &  $\mathbf{\times 4.82}$  &  $\mathbf{\times 1.59}$  &  $\mathbf{\times 2.18}$ 
&&  $\times \mathbf{3.65}$ &  $\times \mathbf{4.82}$  &  $\times \mathbf{2.02}$ &   $\times \mathbf{2.86}$ \\
                    
\bottomrule
\bottomrule
\end{tabular}}
\end{table*}

\textbf{Three objectives - accuracy/worst accuracy controlled, cost minimized.} Results for additional baselines are shown in Fig.~\ref{fig:res_2contorled_1free_more}. Here too, our method performs the best. In this scenario, Constrained-Path Testing obtains similar results while Low-Risk Path is significantly worse.\looseness=-1


\textbf{Three objectives - accuracy and abstention rate controlled, cost minimized.} We use the same setup of selective classification described in \S\ref{sec:results}. Note that combining $\lambda$ with the other three pruning dimensions, we obtain a four dimensional hyper-parameter space, where there is a complex interplay between the hyper-parameters and the risk functions. As $\lambda$ increases we expect to get better accuracy-cost trade-offs since we remove difficult examples. In addition, abstention rate is monotonic with respect to $\lambda$ but is also influenced by the pruning dimension in an uncharacterized manner. We control both accuracy reduction and abstention rate, while minimizing cost. Since the risk functions are not necessarily monotonic with respect to all hyper-parameters, Low-Risk Path is not relevant. Moreover,since here we have a 4D hyper-parameter space, 3D SGT cannot be applied. Results are summarized in Fig.~\ref{fig:acc_cov_controled}. Here too, Pareto Testing performs the best, with similar results to Split-\ac{FST}, and worst results to Constrained-Path Testing.

\begin{figure}[t]
\begin{center}
\begin{subfigure}[b]{0.34\textwidth}
 \centering
 \includegraphics[width=\textwidth]{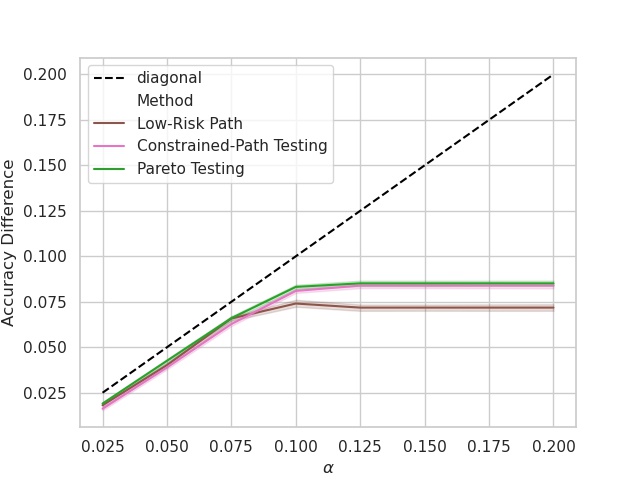}
 \caption{Accuracy Difference}
\end{subfigure}
\hspace{-1em}
\begin{subfigure}[b]{0.34\textwidth}
 \centering
 \includegraphics[width=\textwidth]{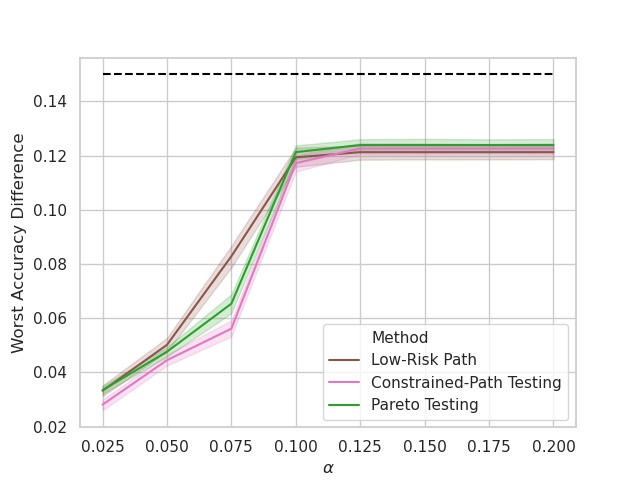}
 \caption{Worst Accuracy Difference}
\end{subfigure}
\hspace{-1em}
\begin{subfigure}[b]{0.32\textwidth}
 \centering
 \includegraphics[width=\textwidth]{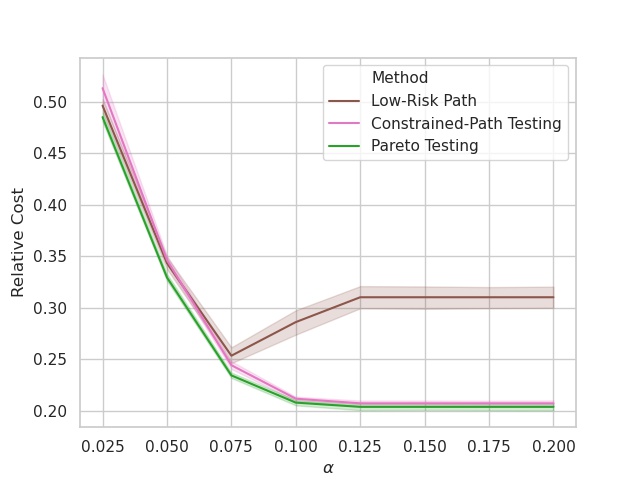}
 \caption{Relative Cost}
\end{subfigure}
\end{center}
\caption{Three-objective scenario, MNLI - two controlled, one free: average accuracy is controlled by $\alpha_1\in\{0.025, 0.5,\ldots, 0.2\}$, worst accuracy is controlled by $\alpha_2=0.15$, $\delta=0.1$, cost is minimized without control. Results are averaged over $100$ random splits to calibration and test.}
\label{fig:res_2contorled_1free_more}
\end{figure}

\begin{figure}[t]
\begin{center}
\begin{subfigure}[b]{0.34\textwidth}
 \centering
 \includegraphics[width=\textwidth]{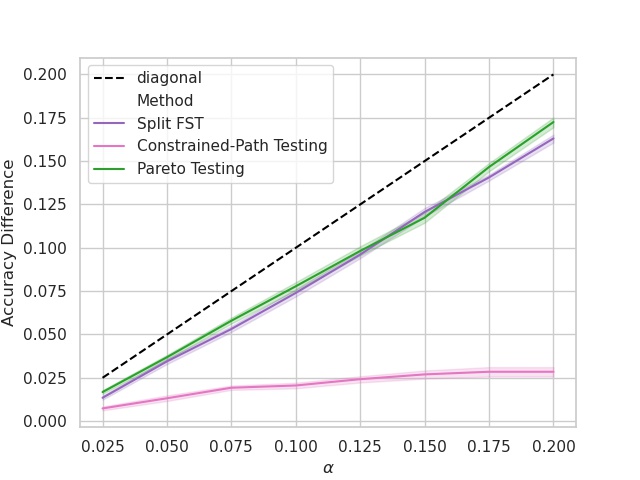}
 \caption{Accuracy Difference}
 \label{fig:imdb_acc_cost}
\end{subfigure}
\hspace{-1em}
\begin{subfigure}[b]{0.34\textwidth}
 \centering
 \includegraphics[width=\textwidth]{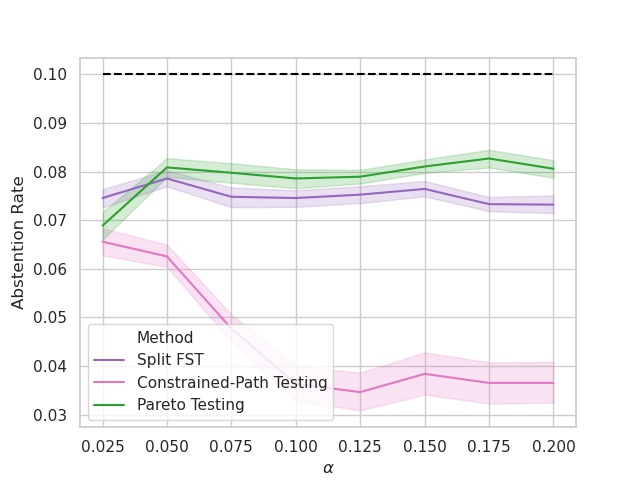}
 \caption{Abstention Rate}
 \label{fig:ag_acc_cost}
\end{subfigure}
\hspace{-1em}
\begin{subfigure}[b]{0.32\textwidth}
 \centering
 \includegraphics[width=\textwidth]{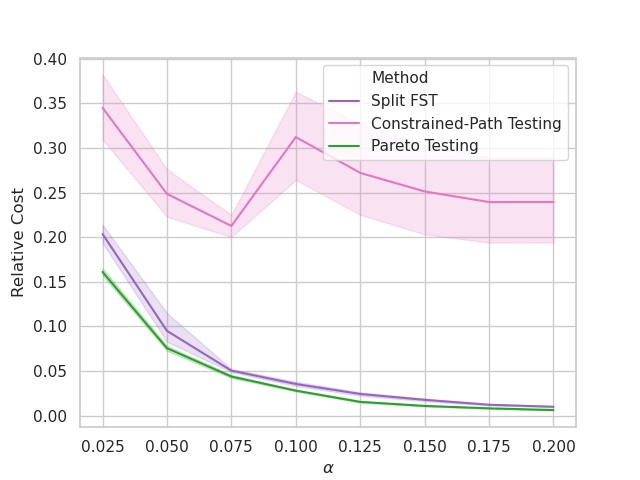}
 \caption{Relative Cost}
\end{subfigure}
\end{center}
\caption{Three-objective scenario, AG News - two controlled, one free: average accuracy is controlled by $\alpha_1\in\{0.025, 0.5,\ldots, 0.2\}$, abstention rate is controlled by $\alpha_2=0.1$, $\delta=0.1$, cost is minimized. Results are averaged over $100$ random splits to calibration and test.}
\label{fig:acc_cov_controled}
\end{figure}

\section{Family-wise error rate control}
\label{sec:fwer}
Let $\mathcal{T}_g$ denote a set of possible configurations to test, and $\mathcal{T}_r\subseteq\mathcal{T}_g$ denote the set of rejected hypotheses. When performing \ac{MHT}, a \ac{FWER}-controlling procedure accounts for controlling the probability of making one or more false discoveries, i.e. falsely rejecting at least one true null hypothesis:
\begin{equation}
\mathbb{P}\left(|\mathcal{T}_r\cap \mathcal{T}_0|\geq 1\right) \leq \delta 
\label{eq:fwer}
\end{equation}
where $\mathcal{T}_0 \subseteq\mathcal{T}_g$ is the set of configurations for which the null hypothesis is true. 

We briefly describe several possible procedures, some of which exploit a-priori known structure in the hypotheses set.  

\textbf{Bonferroni Correction.} This is the simplest procedure for counteracting the multiple testing problem, while being also the most conservative. 
The set of rejected hypotheses retrieved by the Bonferroni correction~\citep{bonferroni1936teoria}, is given by:
\begin{equation}
    \mathcal{H}_{BF} = \left\{H_{\bm{\tau}}: p^\textrm{cal}(\bm{\tau},\bm{\alpha}) < \delta/|\mathcal{T}_g|\right\}
\end{equation}

\textbf{Fixed Sequence Testing.} Multiplicity correction can be avoided when relying on a \emph{pre-defined} ordering of the hypotheses. In \ac{FST}, the hypotheses are sequentially tested with the same error budget, until failing to reject for the first time. Denoting by $H_{\bm{\tau}^{(1)}},\ldots, H_{\bm{\tau}^{(|\mathcal{T}_g|)}}$ the ordered set of hypotheses, \ac{FST} yields the following set of rejected hypotheses~\citep{holm1979simple}:   
\begin{equation}
    \mathcal{H}_{FST} = \{H_{(j)}: j< J \}, \>\> J = \min_j \{j: p_{(j)}\geq \delta\} 
\end{equation}

This procedure is advantageous in the case that there is a natural ordering of the hypotheses from the most likely to be rejected to the least likely one. For example, it can be applied in our problem when $n=1$ and the hypotheses are ordered by threshold values from low to high.

\textbf{Sequential Graphical Testing} \ac{SGT}~\cite{bretz2009graphical} can be viewed as an extension to \ac{FST}, where the relation between the hypotheses is richer than just a sequential path, and is therefore parameterized by a directed graph $G$. The graph's nodes are null hypotheses, and the edges connecting between them specify the way the error budget propagates from one node to the other. 
Each node is allocated an initial error budget.  
Each time an hypothesis is rejected, the procedure reallocates the error budget from node $i$ to the rest of the nodes according to the edge's weights, and the graph is modified. Several possible graph structures were proposed in~\citep{angelopoulos2021learn} for the case of a two-dimensional grid of hypotheses. One option is a `Hamming graph' in which the initial error budget is allocated to the bottom-right node, and the error budget is propagated outward. Another option is 'Fallback', where the error budget is split between each possibility in the first dimension. An \ac{FST} is then performed for a fixed value on the first dimension, and progressing in the other dimension. 


\textbf{Split Fixed Sequence Testing.} Proposed in~\citep{angelopoulos2021learn}, Split-\ac{FST} can be utilized when there is no clear structural relationship between the hypotheses for defining a graph for \ac{SGT}. The core idea is to split the calibration data in two subsets, where the first split is used to learn the graph, while the other is used for testing. Specifically, they propose to define a sequence of p-values $\beta$ ranging from $0$ to $1$. Then, for each $\beta$, find the hypothesis where the p-values of all risks (computed over the first split) are the closest (in vector infinity norm) to $\beta$. Based on this ordering, \ac{FST} can be then performed over the second split.

\section{Algorithms}

\begin{algorithm}
\caption{Learn then Test (Single Objective)}\label{alg:ltt}
\textbf{Definitions:} configurable model $f$ adapted by $n$ thresholds $\bm{\tau}=(\tau_1,\ldots,\tau_n)$, calibration data $\mathcal{D}_\textrm{cal}$ of size $m$, objective function $Q$, user-specified control limit $\alpha$ and tolerance level $\delta$, a set of hyper-parameter combinations $\mathcal{T}_g$.\\
\begin{algorithmic}[1] \setstretch{1}
\Function{Calibration}{$\mathcal{D}_\textrm{cal}$, 
$\mathcal{T}_g$, $\alpha$, $\delta$}  
\For{$\bm{\tau} \in \mathcal{T}_g$}
    \State Associate the null hypothesis:
    \begin{equation}
    H_{\bm{\tau}}: \>Q(\bm{\tau}) \geq \alpha
    \end{equation}
    \State Compute the \emph{empirical} risk over $\mathcal{D}_\textrm{cal}$:
    \begin{equation}
    \hat{Q}^\textrm{cal}(\bm{\tau})= \frac{1}{m}\sum_{(x,y)\in \mathcal{D}_\textrm{cal}}q(x,y; \bm{\tau})    \label{eq:emp_r}
    \end{equation}
    \State Compute  Hoeffding p-value (or Hoeffding-Bentkus p-value): 
    \begin{equation}
    p^\textrm{cal}(\bm{\tau},\alpha) = p\left(\hat{Q}^\textrm{cal}(\bm{\tau});\alpha, m\right)= e^{-2 m\left(\alpha-\hat{Q}^\textrm{cal}(\bm{\tau})\right)^2_+}
    \label{eq:p_value}
    \end{equation}
\EndFor    
\State Recover a subset of thresholds $\mathcal{T}_r\subseteq\mathcal{T}_g$ for which the null hypothesis is rejected, by applying a \ac{FWER} controlling procedure.\\
\hspace*{\algorithmicindent}\Return $\mathcal{T}_r$
\EndFunction
\end{algorithmic}
\end{algorithm}

\begin{algorithm}[!h]
\caption{3D Graph Testing}\label{alg:graph3d_test}
\textbf{Definitions:} 
configurable model $f$ adapted by $n$ thresholds $\bm{\tau}=(\bm{\tau}_1,\ldots,\tau_n)$, calibration data $\mathcal{D}_\textrm{cal}$ of size $m$, objective functions $Q_1, \ldots, Q_c$, user-specified control limits $\bm{\alpha}=(\alpha_1,\ldots, \alpha_{c})$ and tolerance level $\delta$, a grid of $I\times J\times K$ thresholds $\mathcal{T}_g =\{\tau_{1}^1,\ldots, \tau_{1}^I\}\times\{\tau_{2}^1,\ldots, \tau_{2}^J\} \times\{\tau_{3}^1,\ldots, \tau_3^K\}$, $\tau^{i,j,k}=(\tau_1^i,\tau_2^j,\tau_3^k)$, 3D graph $W$ with $I\times J\times K$ nodes and weights $W_{{i',j',k'}\rightarrow{i,j,k}}$ determining the error propagation, $\mathbf{A}$ an $I\times J\times K$ matrix with initial error budget for each configuration,
satisfying $\sum_{i,j,k}A_{i,j,k}=\delta$.
\begin{algorithmic}[1] 		\setstretch{1.}
\Function{calibrate}{$\mathcal{D}_\textrm{cal}, \bm{\alpha}, \mathbf{A}$, W}  
\For{$\bm{\tau} \in \mathcal{T}_g$}
\State For $\bm{\tau} \in \mathcal{T}_g$ compute $\hat{Q}^{\textrm{cal}}_{i}(\bm{\tau})= \frac{1}{m}\sum_{(x,y)\in \mathcal{D}_\textrm{cal}}q_i(x,y; \bm{\tau})$.
\State For $\bm{\tau} \in \mathcal{T}_g$ compute, compute p-values $p^\textrm{cal}(\bm{\tau},\bm{\alpha}) = \max_{1\leq i \leq c} p\left(\hat{Q}^{\textrm{cal}}_{i}(\bm{\tau});\alpha_i, m\right)$. 
\EndFor
\State $\mathcal{T}_r\leftarrow \textsc{Sgt}\left(W, p^\textrm{cal}(\bm{\tau},\bm{\alpha})\right)$.\\
\hspace*{\algorithmicindent}\Return  $\mathcal{T}_r$
\EndFunction 
\Function{Sgt}{$W, p^\textrm{cal}(\bm{\tau},\bm{\alpha})$}
\State $\mathcal{I} =\{1,\ldots, I\}\times \{1,\ldots, J\}\times\{1,\ldots, K\}$
\State $\mathcal{T}_r\leftarrow \{\}$
\State $i^*,j^*,k^* = \argmin_{(i,j,k)\in \mathcal{I}} p^\textrm{cal}(\bm{\tau}^{i,j,k},\bm{\alpha})/A_{i,j,k}$
\While{$|\mathcal{I}|\geq 1$}
\If {$p^\textrm{cal}(\bm{\tau}^{i,j,k},\bm{\alpha})<A_{i^*,j^*,k^*}$}
\State $\mathcal{T}_r \leftarrow \mathcal{T}_r \cup \bm{\tau}^{i^*,j^*,k^*}$
\State $\mathcal{I} \leftarrow \mathcal{I}/(i^*,j^*,k^*)$
\State $A_{i,j,k} \leftarrow A_{i,j,k} + A_{i*,j*,k*}W_{{i*,j*,k*}\rightarrow{i,j,k}},\forall (i,j,k)\in \mathcal{I}$ 
\EndIf
\State $i^*,j^*,k^* = \argmin_{(i,j,k)\in \mathcal{I}} p^\textrm{cal}(\bm{\tau}^{i,j,k},\bm{\alpha})/A_{i,j,k}$
\EndWhile
\EndFunction
\hspace*{\algorithmicindent}\Return  $\mathcal{T}_r$
\end{algorithmic}
\end{algorithm}

\begin{algorithm}
\caption{Shortest-Path Testing}\label{alg:shortest_test}
\textbf{Definitions:} configurable model $f$ adapted by $n$ thresholds $\bm{\tau}=(\tau_1,\ldots,\tau_n)$, $\mathcal{D}_{\mathrm{cal}} = \mathcal{D}_\mathrm{opt} \cup \mathcal{D}_\mathrm{testing}$ is a calibration set of size $m$, split into optimization and (statistical) testing sets of size $m_1$ and $m_2$, respectively, objective functions $Q_1, \ldots, Q_c$, user-specified control limits $\bm{\alpha}=(\alpha_1,\ldots, \alpha_{c})$ and tolerance level $\delta$, hyper-parameter resolution $\bm{\gamma}=(\gamma_1,\ldots,\gamma_n)$, where $\gamma_j$ is the resolution in the $j$-th dimension, $\bm{\tau}_\textrm{min}$ consists of the minimum values of all thresholds, $\bm{\tau}_\textrm{max}$ consists of the maximum values of all thresholds.\\
\begin{algorithmic}[1] 		\setstretch{1}
\Function{Optimization}{$\mathcal{D}_\textrm{opt}, \bm{\alpha}$}  
\State Define the constrained problem $\argmin_{\bm{\tau} \in \mathcal{T}}\hat{Q}^\textrm{opt}_{c}(\bm{\tau}), \textrm{ s.t. } \hat{Q}^\textrm{opt}_{i}(\bm{\tau})<\alpha_i, \forall 1\leq i \leq c-1$.
\State Apply constrained optimization to find optimal configuration $\bm{\tau}_\textrm{opt}$.
\State $\mathcal{T}_\textrm{opt}\leftarrow \textsc{CreatePath}\left(\bm{\tau}_\textrm{min},\bm{\tau}_\textrm{opt}, \bm{\gamma}\right)\cup\textsc{CreatePath}\left(\bm{\tau}_\textrm{opt},\bm{\tau}_\textrm{max}, \bm{\gamma}\right)$.\\
\hspace*{\algorithmicindent}\Return  $\mathcal{T}_\textrm{opt}$
\EndFunction 
\Function{CreatePath}{$\bm{\tau}^\textrm{start},\bm{\tau}^\textrm{end}, \bm{\gamma}$}
\State $\bm{\tau}^{(0)}\leftarrow\bm{\tau}_\textrm{start}$
\State  $i\leftarrow 1$
\While{$\bm{\tau}^{(i)}\neq \bm{\tau}^\textrm{end}$}
\State $p_\textrm{min} \leftarrow \infty$
\For{$j\in \{1,\ldots, n\}$}
\State $\tilde{\tau}^{(i)}_j\leftarrow \tau^{(i)}_j +\gamma_j$
\If{$\tilde{\tau}^{(i)}_j > \tau^\textrm{end}_j$}
\State \textbf{Continue}
\Else
\State $\bm{\tau}^\textrm{next} \leftarrow \left(\tau^{(i)}_1,\ldots,\tilde{\tau}^{(i)}_j,\ldots, \tau^{(i)}_n\right)$
\If{$p^\textrm{opt}(\bm{\tau}^\textrm{next},\bm{\alpha})<p_\textrm{min}$}
\State $\bm{\tau}^{(i)}\leftarrow \bm{\tau}^\textrm{next}$
\State $p_\textrm{min}\leftarrow p^\textrm{opt}(\bm{\tau}^\textrm{next},\bm{\alpha})$
\EndIf
\EndIf
\EndFor
\State  $i\leftarrow i+1$
\EndWhile
\hspace*{\algorithmicindent}\Return $\left(\bm{\tau}^{(0)},\ldots,\bm{\tau}^{(i)}\right)$
\EndFunction

\Function{Calibration}{$\mathcal{D}_\textrm{testing}$, 
$\mathcal{T}_\textrm{opt}$, $\bm{\alpha}$, $\delta$}  
\Comment{Same as in Algorithm~\ref{alg:pareto_test}}
\EndFunction
\end{algorithmic}
\end{algorithm}

\end{document}